\let \oldsection \section
\renewcommand{\section}{\vspace{3ex plus 1ex}\oldsection}
\definecolor{ddarkbrown}{rgb}{0.5,0.2,0.05} \definecolor{bbluegray}{rgb}{0.05,0,0.5}
\newtheorem{theorem}{Theorem}[section]
\newtheorem{definition}[theorem]{Definition}
\newtheorem{lemma}[theorem]{Lemma}
\newtheorem{remark}[theorem]{Remark}
\newtheorem{corollary}[theorem]{Corollary}
\newcommand{\BEAS}{\begin{eqnarray*}}
\newcommand{\EEAS}{\end{eqnarray*}}
\newcommand{\BEA}{\begin{eqnarray}}
\newcommand{\EEA}{\end{eqnarray}}
\newcommand{\BEQ}{\begin{equation}}
\newcommand{\EEQ}{\end{equation}}
\newcommand{\BIT}{\begin{itemize}}
\newcommand{\EIT}{\end{itemize}}
\newcommand{\BNUM}{\begin{enumerate}}
\newcommand{\ENUM}{\end{enumerate}}
\newcommand{\BA}{\begin{array}}
\newcommand{\EA}{\end{array}}
\newcommand{\argmax}{\mathop{\rm argmax}}
\title{Linear Bandits on Uniformly Convex Sets}
\author[1,2]{Thomas Kerdreux\footnote{Corresponding author, thomaskerdreux at gmail.com.}}
\author[2]{Christophe Roux}
\author[3,4]{Alexandre d'Aspremont}
\author[1,2]{Sebastian Pokutta}
\affil[1]{Zuse Institute, Berlin, Germany}
\affil[2]{Technische Universit{\"a}t, Berlin, Germany}
\affil[3]{CNRS, UMR 8548}
\affil[4]{D.I., \'Ecole Normale Sup\'erieure, Paris, France}
\date{\today}
\begin{document}

\maketitle

\begin{abstract}
Linear bandit algorithms yield $\tilde{\mathcal{O}}(n\sqrt{T})$ pseudo-regret bounds on compact convex action sets $\mathcal{K}\subset\mathbb{R}^n$ and two types of structural assumptions lead to better pseudo-regret bounds. When $\mathcal{K}$ is the simplex or an $\ell_p$ ball with $p\in]1,2]$, there exist bandits algorithms with $\tilde{\mathcal{O}}(\sqrt{nT})$ pseudo-regret bounds. Here, we derive bandit algorithms for some strongly convex sets beyond $\ell_p$ balls that enjoy pseudo-regret bounds of $\tilde{\mathcal{O}}(\sqrt{nT})$, which answers an open question from \parencite[\S 5.5.]{bubeck2012regret}. Interestingly, when the action set is uniformly convex but not necessarily strongly convex, we obtain pseudo-regret bounds with a dimension dependency smaller than $\mathcal{O}(\sqrt{n})$. However, this comes at the expense of asymptotic rates in $T$ varying between $\tilde{\mathcal{O}}(\sqrt{T})$ and $\tilde{\mathcal{O}}(T)$.
\end{abstract}

\section{Introduction}
We consider online linear learning with partial information, a.k.a.~the \textit{linear bandit problem}.
At each round $t\leq T$, the \emph{player} (the bandit algorithm) chooses $a_t\in\mathcal{K}\subset\mathbb{R}^n$ and an \emph{adversary} simultaneously decides on a loss vector $c_t\in\mathbb{R}^n$ (loss is linear).
The player then observes its loss $\langle c_t; a_t\rangle$ but does not have access $c_t$. 
The goal of the player is to minimize its cumulative loss $\sum_{t=1}^{T}\langle c_t;a_t\rangle$. 
The \emph{regret} $R_t$ compares this cumulative loss against the cumulative loss of the best single action in hindsight, \textit{i.e.},
\begin{equation}\label{eq:regret}\tag{Regret}
R_T(\mathcal{K}) \triangleq \sum_{t=1}^{T}\langle c_t ; a_t\rangle - \underset{a\in\mathcal{K}}{\text{min }} \sum_{t=1}^{T}\langle c_t ; a\rangle.
\end{equation}
Bandit algorithms use internal randomization to obtain sub-linear regret upper bounds.
There exist several notions of regret to monitor the performance of bandit algorithms. 
The expected regret or an upper bound on \eqref{eq:regret} with high-probability are the most meaningful, yet challenging to obtain.
Hence, the weaker notion of \emph{pseudo-regret} is often considered as a good proxy for measuring the bandit performance \parencite{bubeck2012regret}. 
It serves as a motivation to design new bandit algorithms.
Let us write $\mathbb{E}$ the expectation w.r.t.~the randomness of the bandit action only, we have
\begin{equation}\label{eq:pseudo_regret}\tag{Pseudo-Regret}
\Bar{R}_T(\mathcal{K}) \triangleq \mathbb{E}\sum_{i=1}^{T} \langle c_t ; a_t\rangle - \underset{a\in\mathcal{K}}{\text{min }}\mathbb{E}\sum_{i=1}^{T} \langle c_t ; a\rangle.
\end{equation}
% Strategie of the bandit is with randomness etc.. 
We make the \textit{bounded scalar loss assumption}, \textit{i.e.}, $c_t$ is such that $\langle c_t; a\rangle \leq 1$ for any $a\in\mathcal{K}$.
In particular, it means that $c_t$ belongs to the \emph{polar} $\mathcal{K}^\circ\triangleq \{d\in\mathbb{R}^n~|~\langle d;x\rangle \leq 1, \forall x\in\mathcal{K}\}$ of $\mathcal{K}$.

There exist bandit algorithms with $\tilde{O}(n\sqrt{T})$ upper bounds on the \emph{pseudo-regret} for general compact convex sets $\mathcal{K}$ \parencite{bubeck2012regret}.
However, since the loss is linear, it is not possible to leverage the lower curvature (\textit{e.g.}, the strong convexity) of the loss function to obtain improved pseudo-regret bounds.
Instead, the bandit algorithm can only leverage the specific structure of the action set $\mathcal{K}$.
To the best of our knowledge, only two structures are known to induce faster pseudo-regret bounds of $\tilde{O}(\sqrt{nT})$:
when $\mathcal{K}$ is a simplex or an $\ell_{p}$ ball with $p\in]1,2]$ \parencite{bubeck2018sparsity}.
In each of these cases, the analysis relies on explicit analytical formulas of the action set rather than on generic quantitative properties, \textit{e.g.}, the strong convexity of the set.

Our goal here is to design bandit algorithms that achieve pseudo-regret of $\tilde{\mathcal{O}}(\sqrt{nT})$ (resp. $\tilde{\mathcal{O}}(n^{1/q}T^{1/p})$) when the set $\mathcal{K}$ is strongly convex (resp. $q$-uniformly convex with $q\geq 2$ and $p$ s.t.~$1/p + 1/q=1$).
The uniform convexity of a set is a measure of the set upper curvature that subsumes strong convexity.
For instance, the $\ell_p$ balls are strongly convex (Definition \ref{def:strong_convexity_set}) for $p\in]1,2]$ but only uniformly convex (Definition \ref{def:uniform_convexity_set}) for $p\geq 2$.

\paragraph{Related Work.}
Linear bandit algorithms are applied in a variety of applications. We detail one of them, which was our initial research motivation.
Linear Bandit algorithms are instrumental in solving minimax problems with convex-linear structure stemming from learning applications, see, \textit{e.g.}, SVMs \parencite{hazan2011beating,clarkson2012sublinear} or Distributional Robust Optimization \parencite{namkoong2016stochastic,curi2020adaptive}.
In these settings, the minimax variable's linear part is a probability distribution over the dataset of size $n$.
The linear bandit algorithms provide a principled framework to adaptively sample a fraction of the dataset per iteration while ensuring the convergence to a minimax optimum. 
The iterations' cost of the minimax algorithm is then favorably dependent on the size $n$ of the dataset.
However, the dimension dependency of the linear bandit algorithm's regret bound now appears in the minimax method's convergence rate, making it crucial to design linear bandit algorithms with favorable dimension-dependent regret bounds.

Significant focus has been dedicated to designing efficient algorithms (in the full and partial feedback setting) leveraging additional properties of the loss functions such as smoothness, strong convexity \parencite{saha2011improved,hazan2014bandit,garber2020improved,Garber20} with much less attention to the corresponding structural assumptions on the action sets.
By studying the effect of uniform convexity of the action set in the bandit setting, we contribute to filling this gap.
Note that some works recently relied on smoothness \parencite{levy2019projection} or uniform convexity assumptions on the set in online linear learning \parencite{huang2016following,huang2017following,Molinaro2020,kerdreux2020projection} or ``online learning with a hint'' \parencite{dekel2017online,bhaskara2020online,bhaskara2020onlineMany}.

At a high level, our work shares some similarities with \parencite{d2018optimal} for affine-invariant analysis of accelerated first-order methods or with \parencite{srebro2011universality,rakhlin2017equivalence} in the full-information setting.
Indeed, they link regret bounds of online mirror descent algorithms with the Martingale type of the ambient space.
Here, we instead rely on the uniform convexity of the action set. 
It is a more intuitive yet stronger requirement, for an explanation see, \textit{e.g.}, \parencite{kerdreux21}.

\paragraph{Contribution.}
Our contribution are three-fold.
\begin{enumerate}
    \item We propose a barrier function $F_{\mathcal{K}}$ for the bandit problem with strongly convex sets (more generally uniformly convex sets), \textit{i.e.}, for $x\in\text{int}(\mathcal{K})$
    \begin{equation}\label{eq:barrier_function}\tag{Barrier}
    F_{\mathcal{K}}(x) \triangleq -\ln(1-\|x\|_{\mathcal{K}}) - \|x\|_{\mathcal{K}},
    \end{equation}
    where $\|\cdot\|_{\mathcal{K}}$ is the \emph{gauge function} to $\mathcal{K}$. 
    For $x\in\mathbb{R}^n$, it is defined as
    \begin{equation}\label{eq:gauge_function}\tag{Gauge}
    \|x\|_{\mathcal{K}} = \text{inf}\big\{ \lambda >0 ~|~  x \in\lambda\mathcal{K} \big\}.
    \end{equation}
    
    \item In Theorem \ref{th:linear_bandit_strongly_convex} we provide a pseudo-regret upper bound $\tilde{\mathcal{O}}(\sqrt{nT})$ for a linear bandit algorithm on some strongly convex sets.
    To the best of our knowledge, this setting has never been studied except in the specific case of the $\ell_{p}$ balls with $p\in]1,2]$. 
    Importantly, this drastically extends the family of actions sets, \textit{i.e.}, besides the simplex and the $\ell_p$ balls with $p\in]1,2]$, with such improved dimension dependency of the pseudo-regret bound in $\mathcal{O}(\sqrt{n})$.
    This is an answer to the open question from \parencite[\S 5.5.]{bubeck2012regret}.
    
    \item When the action set is $(\alpha,q)$-uniformly convex with $q\geq2$, we prove in Theorem \ref{th:linear_bandit_UC} a pseudo-regret bound of $\tilde{\mathcal{O}}(n^{1/q}T^{1/p})$ with $p\in]1,2]$ s.t.~$1/p + 1/q =1$.
    This trade-off means that it is possible to obtain a pseudo-regret bound with dimension dependence faster than $\mathcal{O}(\sqrt{n})$ balanced by a slower (w.r.t.~$\tilde{\mathcal{O}}(\sqrt{T})$) asymptotical regime in $T$.
    While counter-intuitive at first, this can be of interest, \textit{e.g.}, in minimax problems where the dataset is large.
\end{enumerate}

\paragraph{Outline.}
In Section \ref{sec:preliminaries} we introduce the structural assumptions on the action sets $\mathcal{K}$ and provide some elementary results linking these structures with important quantities in the analysis.
In Section \ref{ssec:algorithm} we describe the classical Mirror Descent type algorithm for bandits and our design of barrier function for uniformly convex sets.
In Section \ref{ssec:main_results}, we then provide the convergence rates when the action sets are strongly convex (Theorem \ref{th:linear_bandit_strongly_convex}) and uniformly convex (Theorem \ref{th:linear_bandit_UC}).
In Section \ref{ssec:technical_lemma}, we present the main technical lemmas. 
Finally, in Appendix \ref{app:uc_equivalence_simplified} we prove the main link between uniform convexity of the set $\mathcal{K}$ and upper bounds of the Bregman Divergence of a specific function.

\paragraph{Notations.}
Let $\mathbb{R}^n$ be the ambient space and $(e_i)$ its canonical basis.
For a norm $\|\cdot\|$, we write $\|d\|^\star=\text{sup}_{\|x\|\leq 1} \langle x; d \rangle$ for its \emph{dual norm}.
For a convex function $f$, we write $f^*(d)\triangleq \text{sup}_{x\in\mathbb{R}^n} \langle x; d\rangle - f(x)$ its \emph{Fenchel conjugate}.
Let 
$\ell_{\infty}(R)$ be the infinity ball with radius $R>0$ and $\ell_1(r)$ the norm ball of $\|x\|_1=\sum_{i=1}^{n}{|x_i|}$ with radius $r>0$.
For an open set $\mathcal{D}\subset\mathbb{R}^n$, we write $\Bar{\mathcal{D}}$ its \emph{closure}.
For a compact convex set $\mathcal{K}$, we write $\partial\mathcal{K}$ its \emph{boundary} and $\text{Int}(\mathcal{K})$ its \emph{interior}.
$N_{\mathcal{K}}(x)\triangleq \big\{d\in\mathbb{R}^n~|~\langle x-y ; d\rangle \geq 0~~\forall y\in\mathcal{K}\big\}$ is the \emph{normal cone} of $\mathcal{K}$ at $x$. 
We consider fully-dimensional compact convex sets $\mathcal{K}\subset\mathbb{R}^n$ s.t.~$\ell_1(r)\subset\mathcal{K}\subset\ell_{\infty}(R)$ for some $r,R>0$ which are a priori numerical constant, in particular not depending on the dimension $n$.
For $d\in\mathbb{R}^n$, we write $\sigma_{\mathcal{K}}(d)\triangleq \text{sup}_{x\in\mathcal{K}}\langle x; d\rangle$  the \emph{support function} $\sigma_{\mathcal{K}}$ of $\mathcal{K}$.
We have $\|\cdot\|_{\mathcal{K}}^\star=\sigma_{\mathcal{K}}$.
Recall that we write $\mathcal{K}^\circ= \{d\in\mathbb{R}^n~|~\langle d;x\rangle \leq 1, \forall x\in\mathcal{K}\}$ the \emph{polar} of $\mathcal{K}$.
Note that we have $\|\cdot\|_{\mathcal{K}^\circ}=\sigma_{\mathcal{K}}$.
We write $X\sim\text{Ber}(p)$ (resp. $\text{Rademacher}(p)$) a random variable $X$ following a Bernoulli (a Rademacher), \textit{i.e.}, with values in $\{0,1\}$ (resp. $\{-1,1\}$) and $\mathbb{P}(X=1) = p$.

\section{Preliminaries}\label{sec:preliminaries}
In this section, we introduce the structural assumption on $\mathcal{K}$ we will consider. 
Note that we will assume set smoothness (Definition \ref{def:smooth_set}) simply to ensure that \eqref{eq:barrier_function} is differentiable. 
On the contrary, the strong convexity (Definition \ref{def:strong_convexity_set}) is the structure that allows for the $\sqrt{n}$ acceleration in the pseudo-regret bounds.
Then we review the link between the structure of $\mathcal{K}$ and the differentiability of the set gauge function \eqref{eq:gauge_function} which then allows us to study the properties of the proposed barrier.
Finally, we link upper bounds on some Bregman distance with the strong convexity of some set in Lemma \ref{lem:bregman_upper_bound}.
This will be a key inequality in our analysis.\\

\noindent A convex differentiable function $f$ is $L$-smooth on $\mathcal{K}$ w.r.t.~$\|\cdot\|$ if and only if for any $(x,y)\in\mathcal{K}\times\mathcal{K}$
\begin{equation}\label{eq:L_smoothness_f}\tag{Smoothness}
f(y) \leq f(x) + \langle \nabla f(x); y-x \rangle + \frac{L}{2} \|y-x\|^2.
\end{equation}
The H\"older smoothness of a function is a relaxation of \eqref{eq:L_smoothness_f}. 
For $p\in]1,2]$, a convex differentiable function $f$ is $(L, p)$-H\"older smooth w.r.t.~$\|\cdot\|$ if and only if for any $(x,y)\in\mathcal{K}\times\mathcal{K}$
\begin{equation}\label{eq:L_Holder_smoothness_f}\tag{H\"older-Smoothness}
f(y) \leq f(x) + \langle \nabla f(x); y-x \rangle + \frac{L}{p} \|y-x\|^p. 
\end{equation}
On the other hand, a set $\mathcal{K}$ is \emph{smooth} when there is exactly one supporting hyperplane at each point of its boundary $\partial\mathcal{K}$ \parencite{schneider2014convex}.
This can be defined as follows.
\begin{definition}[Smooth Set]\label{def:smooth_set}
A compact convex set $\mathcal{K}$ is smooth if and only if $|N_{\mathcal{K}}(x)\cap\partial\mathcal{K}^\circ|=1$ for any $x\in\partial\mathcal{K}$.
\end{definition}
One should be cautious not to confuse the smoothness of $f$ as defined in \eqref{eq:L_smoothness_f} and the smoothness of $\mathcal{K}$ as defined in Definition \ref{def:smooth_set}.
Indeed, the smoothness of the set is a much weaker notion as, for instance, it implies only the differentiability of $\sigma_{\mathcal{K}}(\cdot)$, see Lemma \ref{eq:differentiability_gauge}.
Note that not all strongly convex set are smooth.
For instance, the $\ell_p$ or the $p$-Schatten balls for $p\in]1,2]$ are smooth and strongly convex but the $\ell_{1,2}$ ball (Elastic-Net constraints) is strongly convex but not smooth.
Also, the smoothness and strict convexity of a set are dual properties to each other in the following sense \parencite[\S 26]{kothe1983topological}
\begin{lemma}[Duality Set Smoothness and Strict Sonvexity]\label{lem:duality_smoothness_convexity}
Consider a compact convex set $\mathcal{K}\subset\mathbb{R}^n$.
Then, $\mathcal{K}$ is strictly convex if and only if $\mathcal{K}^\circ$ is smooth.
\end{lemma}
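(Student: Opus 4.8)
The plan is to reduce \emph{both} properties to the single intrinsic statement that every exposed face of $\mathcal{K}$ is a singleton, using the polarity dictionary between support functions, gauges, and normal cones. Throughout I use that $\ell_1(r)\subset\mathcal{K}$ forces $0\in\mathrm{Int}(\mathcal{K})$, so $\mathcal{K}^\circ$ is again a compact convex set with $0$ in its interior, and the bipolar theorem gives $\mathcal{K}^{\circ\circ}=\mathcal{K}$. I also use the stated identities $\|\cdot\|_{\mathcal{K}^\circ}=\sigma_{\mathcal{K}}$ and $\|\cdot\|_{\mathcal{K}}=\sigma_{\mathcal{K}^\circ}$, which in particular say that $d\in\partial\mathcal{K}^\circ$ iff $\sigma_{\mathcal{K}}(d)=1$ and $x\in\partial\mathcal{K}$ iff $\sigma_{\mathcal{K}^\circ}(x)=1$.

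First I would make the normal-cone description explicit. Unwinding the definition of $N_{\mathcal{K}}$ shows $d\in N_{\mathcal{K}}(x)$ iff $\langle d;x\rangle=\sigma_{\mathcal{K}}(d)$; intersecting with $\partial\mathcal{K}^\circ$ and using $\sigma_{\mathcal{K}}(d)=1$ there yields
\[
N_{\mathcal{K}}(x)\cap\partial\mathcal{K}^\circ=\big\{\,d\in\partial\mathcal{K}^\circ ~|~ \langle d;x\rangle=1\,\big\}.
\]
Applying the identical computation to the pair $(\mathcal{K}^\circ,(\mathcal{K}^\circ)^\circ)=(\mathcal{K}^\circ,\mathcal{K})$ gives, for $d\in\partial\mathcal{K}^\circ$,
\[
N_{\mathcal{K}^\circ}(d)\cap\partial\mathcal{K}=\big\{\,x\in\partial\mathcal{K} ~|~ \langle d;x\rangle=1\,\big\}=E_d,\qquad E_d\triangleq\big\{\,x\in\mathcal{K}~|~\langle d;x\rangle=\sigma_{\mathcal{K}}(d)\,\big\},
\]
where the last equality holds because $\sigma_{\mathcal{K}}(d)=1$ on $\partial\mathcal{K}^\circ$, so the constraint $\langle d;x\rangle=1$ selects exactly the exposed face of $\mathcal{K}$ in direction $d$ (and any maximizer automatically lies on $\partial\mathcal{K}$).

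Next I would translate the two hypotheses against this identity. By Definition \ref{def:smooth_set} applied to $\mathcal{K}^\circ$ (whose polar is $\mathcal{K}$), smoothness of $\mathcal{K}^\circ$ means $|N_{\mathcal{K}^\circ}(d)\cap\partial\mathcal{K}|=1$ for every $d\in\partial\mathcal{K}^\circ$, which by the displayed identity is precisely the statement that every exposed face $E_d$ is a singleton. On the other side, strict convexity of $\mathcal{K}$ means $\partial\mathcal{K}$ contains no nondegenerate segment, and since every supporting hyperplane meets $\mathcal{K}$ in some face $E_d$, this too is equivalent to each $E_d$ being a single point. This last equivalence, ``no boundary segment $\iff$ every $E_d$ is a point,'' is the crux I expect to require care: if some $E_d$ contained two distinct points it would contain their segment, contradicting strict convexity; conversely, a boundary segment $[y,y']$ admits, by the supporting hyperplane theorem at a relative interior point, a direction $d$ whose exposed face $E_d$ contains $[y,y']$ and is hence not a singleton.

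Finally, both hypotheses having been reduced to the same condition — every exposed face of $\mathcal{K}$ is a singleton — the claimed biconditional follows at once, with nothing further to compute.
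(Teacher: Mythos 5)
Your proof is correct and is essentially the paper's argument in different packaging: both hinge on identifying $N_{\mathcal{K}^\circ}(d)\cap\partial\mathcal{K}$, for $d\in\partial\mathcal{K}^\circ$, with $\{x\in\mathcal{K}~|~\langle d;x\rangle=1\}$ via the duality pairing and the bipolar theorem, and on the midpoint/supporting-hyperplane argument that links non-singleton sets of this kind to segments in $\partial\mathcal{K}$. The only difference is organizational --- you factor both properties through the common condition that every exposed face of $\mathcal{K}$ is a singleton, while the paper proves the two implications directly (its forward direction establishes the same face-is-a-singleton fact via $1=\langle d;(x_1+x_2)/2\rangle\leq\|(x_1+x_2)/2\|_{\mathcal{K}}$, and its converse is exactly your supporting-hyperplane step at the midpoint of a boundary segment).
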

\begin{proof}
Let us recall the proof for completeness. 
Assume $\mathcal{K}$ is strictly convex and let $d\in\partial\mathcal{K}^\circ$. Let $x_1,x_2\in\partial\mathcal{K}\cap N_{\mathcal{K}^\circ}(d)$.
By definition of the normal cone, we have $\langle d;x_i \rangle \geq \langle d^\prime;x_i \rangle $ for any $d^\prime\in\mathcal{K}^\circ$ and $i=1,2$.
Hence, $\langle d;x_i \rangle = \text{sup}_{d^\prime\in\mathcal{K}^\circ} \langle d^\prime;x_i \rangle = \|x_i\|_{\mathcal{K}}=1$ so that 
\[
1 = \langle d; (x_1 + x_2)/2 \rangle \leq \|d\|_{\mathcal{K}^\circ} \|(x_1 + x_2)/2\|_{\mathcal{K}} = \|(x_1 + x_2)/2\|_{\mathcal{K}},
\]
and we conclude that $(x_1+x_2)/2\in\partial\mathcal{K}$ and by strict convexity of $\mathcal{K}$, $x_1=x_2$ which concludes.
Alternatively, assume that $\mathcal{K}^\circ$ is smooth. 
Assume by the absurd that there exists distinct $x_1,x_2\in\partial\mathcal{K}$ s.t. $(x_1+x_2)/2\in\partial\mathcal{K}$ and let $d\in N_{\mathcal{K}}((x_1+x_2)/2)\cap\partial\mathcal{K}^\circ$. 
Then, by convexity $d\in N_{\mathcal{K}}(x_i)$ for $i=1,2$ and $\langle d; x_i\rangle = 1$. 
In particular, this means that $x_i\in N_{\mathcal{K}^\circ}(d)\cap\partial\mathcal{K}$ for $i=1,2$ and contradicts the smoothness of $\mathcal{K}^\circ$.
\end{proof}

\begin{definition}[Set Strong Convexity]\label{def:strong_convexity_set}
Let $\mathcal{K}$ be a centrally symmetric set with non-empty interior and $\alpha>0$. $\mathcal{K}$ is $\alpha$-uniformly convex w.r.t.~$\|\cdot\|_{\mathcal{K}}$ if and only if for any $x,y,z\in\mathcal{K}$ and $\gamma\in[0,1]$ we have
\begin{equation}\label{eq:strong_convexity_set}\tag{Set Strong Convexity}
\big(\gamma x + (1-\gamma)y + \frac{\alpha}{2} \gamma (1-\gamma) \|x-y\|^2_{\mathcal{K}}z\big)\in\mathcal{K}.
\end{equation}
\end{definition}

More generally, we can define the uniform convexity of a set $\mathcal{K}$ which subsumes the strong convexity.
For instance the $\ell_p$ balls with $p>2$ are uniformly convex but not strongly convex.

\begin{definition}[Set Uniform Convexity]\label{def:uniform_convexity_set}
Let $\mathcal{K}$ be a centrally symmetric set with non-empty interior, $\alpha>0$, and $q\geq 2$. $\mathcal{K}$ is $(\alpha, q)$-uniformly convex w.r.t.~$\|\cdot\|_{\mathcal{K}}$ if and only if for any $x,y,z\in\mathcal{K}$ and $\gamma\in[0,1]$ we have
\begin{equation}\label{eq:def_uniform_convexity}\tag{Set Uniform Convexity}
\big(\gamma x + (1-\gamma)y + \frac{\alpha}{q} \gamma (1-\gamma) \|x-y\|^q_{\mathcal{K}}z\big)\in\mathcal{K}.
\end{equation}
\end{definition}

We now recall the geometrical condition on $\mathcal{K}$ that is equivalent to differentiability of $\mathcal{K}$ \parencite[Corollary 1.7.3.]{schneider2014convex}.

\begin{lemma}[Gauge Differentiability]\label{eq:differentiability_gauge}
A gauge function $\|\cdot\|_{\mathcal{K}}$ \eqref{eq:gauge_function} is differentiable at $x\in\mathbb{R}^n\setminus\{0\}$ if and only if its support set
\begin{equation}\label{eq:support_set}\tag{Support Set}
S(\mathcal{K}^\circ,x)\triangleq \{d\in\mathcal{K}^\circ~:~\langle d; x\rangle = \underset{d^\prime\in\mathcal{K}^\circ}{\sup }\langle d^\prime; x\rangle \},
\end{equation}
contains a single point $d$.
If this is the case, we have $\nabla \|\cdot\|_{\mathcal{K}}(x)=d$.
Besides, the following assertions are true
\begin{enumerate}[label=(\alph*)]
    \item $\big\|\big(\nabla\|\cdot\|_{\mathcal{K}}(x)\big)\big\|_{\mathcal{K}^\circ} = 1$, \textit{i.e.}, $\nabla\|\cdot\|_{\mathcal{K}}(x)\in\mathcal{K}^\circ$. \label{item:unit_gradient}
    
    \item For $\lambda>0$, $\nabla \|\cdot\|_{\mathcal{K}}(\lambda x)=\nabla \|\cdot\|_{\mathcal{K}}(x)$.\label{item:unit_gradient_homogeneity}
    
    \item If $\mathcal{K}^\circ$ is strictly convex then $\|\cdot\|_{\mathcal{K}}$ is differentiable on $\mathbb{R}^n\setminus\{0\}$.\label{item:set_smooth_differentiable}
\end{enumerate}

\end{lemma}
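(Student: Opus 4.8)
The plan is to reduce everything to a single classical fact of convex analysis: the gauge $\|\cdot\|_{\mathcal{K}}$ coincides with the support function of the polar set, and the subdifferential of a support function is exactly its exposed face. First I would record that, since $\ell_1(r)\subset\mathcal{K}$ forces $0\in\text{Int}(\mathcal{K})$ and $\mathcal{K}$ is compact convex, the bipolar theorem gives $\mathcal{K}^{\circ\circ}=\mathcal{K}$, whence $\|x\|_{\mathcal{K}}=\sigma_{\mathcal{K}^\circ}(x)=\sup_{d'\in\mathcal{K}^\circ}\langle d';x\rangle$ for all $x$. Thus $\|\cdot\|_{\mathcal{K}}$ is the support function of $\mathcal{K}^\circ$, and the support set $S(\mathcal{K}^\circ,x)$ from \eqref{eq:support_set} is precisely its set of maximizers.

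The main equivalence then follows from two textbook facts. For a compact convex set $C$, the subdifferential of $\sigma_C$ satisfies $\partial\sigma_C(x)=S(C,x)$ (the face of $C$ exposed by $x$); applied to $C=\mathcal{K}^\circ$ this reads $\partial\|\cdot\|_{\mathcal{K}}(x)=S(\mathcal{K}^\circ,x)$. Moreover a finite convex function is differentiable at $x$ if and only if its subdifferential there is a singleton, in which case the gradient equals the unique subgradient. Combining the two yields exactly the stated biconditional together with the identity $\nabla\|\cdot\|_{\mathcal{K}}(x)=d$.

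For the three assertions I would argue as follows. For \ref{item:unit_gradient}, any $d\in S(\mathcal{K}^\circ,x)\subset\mathcal{K}^\circ$ satisfies $\|d\|_{\mathcal{K}^\circ}=\sigma_{\mathcal{K}}(d)\le 1$; to upgrade this to equality I use that a maximizer of the linear form $\langle\cdot;x\rangle$ with $x\neq 0$ cannot lie in $\text{Int}(\mathcal{K}^\circ)$ — otherwise a small move in the direction $x$ would remain feasible and strictly increase the value — so $d\in\partial\mathcal{K}^\circ$, and points of the boundary have unit gauge, giving $\|d\|_{\mathcal{K}^\circ}=1$. Assertion \ref{item:unit_gradient_homogeneity} is immediate from positive $1$-homogeneity of $\|\cdot\|_{\mathcal{K}}$: for $\lambda>0$ the argmax defining $S(\mathcal{K}^\circ,\lambda x)$ equals that defining $S(\mathcal{K}^\circ,x)$, so the gradients coincide (equivalently, the gradient of a $1$-homogeneous function is $0$-homogeneous). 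For \ref{item:set_smooth_differentiable}, strict convexity of $\mathcal{K}^\circ$ means its boundary contains no nondegenerate segment; since for every $x\neq 0$ the exposed face $S(\mathcal{K}^\circ,x)$ is a convex subset of $\partial\mathcal{K}^\circ$ (by the boundary argument of \ref{item:unit_gradient}), it cannot contain two distinct points, hence is a singleton, and differentiability on $\mathbb{R}^n\setminus\{0\}$ follows from the main equivalence.

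The proof is essentially bookkeeping once the gauge/support-function duality is in place; the only steps requiring genuine care are the boundary argument in \ref{item:unit_gradient} — ensuring the exposed face sits on $\partial\mathcal{K}^\circ$ — and the observation in \ref{item:set_smooth_differentiable} that an exposed face which is not a point would force a segment on the boundary, contradicting strict convexity. I expect the correct invocation of $\partial\sigma_C=S(C,\cdot)$ and the fact that support sets are faces to be the conceptual heart, with no serious computational obstacle.
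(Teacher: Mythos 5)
Your proposal is correct and takes essentially the same approach as the paper: identify $\|\cdot\|_{\mathcal{K}}$ with $\sigma_{\mathcal{K}^\circ}$, reduce the main equivalence to the classical differentiability criterion for support functions (which the paper simply cites from Schneider, while you unfold its standard proof via $\partial\sigma_C(x)=S(C,x)$ and the singleton-subdifferential criterion), and then establish (a) by the same interior-maximizer argument, (b) by homogeneity of the support set, and (c) by the same observation that a non-singleton exposed face would place a segment on $\partial\mathcal{K}^\circ$, contradicting strict convexity.
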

\begin{proof}
The differentiability result for $\|\cdot\|_{\mathcal{K}}$ comes from \parencite[Corollary 1.7.3.]{schneider2014convex}, where we used that $\|\cdot\|_{\mathcal{K}} = \sigma_{\mathcal{K}^\circ}$.
\ref{item:unit_gradient} follows from the fact that the supremum in \eqref{eq:support_set} is attained at $\partial\mathcal{K}^\circ$. 
For $\lambda >0$, we have $S(K^\circ,\lambda x)=S(K^\circ,x)$ and hence \ref{item:unit_gradient_homogeneity}.
Now assume that $\mathcal{K}^\circ$ is strictly convex and consider $x\in\mathbb{R}^n\setminus\{0\}$.
First remark as for \ref{item:unit_gradient} that $S(\mathcal{K}^\circ,x)\subset\partial\mathcal{K}^\circ$. 
Assume that $|S(\mathcal{K}^\circ,x)|\neq1$. 
Then, for $d_1,d_2$ distinct in $S(\mathcal{K}^\circ,x)$, we have $[d_1,d_2]\subset S(\mathcal{K}^\circ,x)\subset \partial\mathcal{K}^\circ$ which then contradicts the strict convexity of $\mathcal{K}^\circ$. Hence $|S(\mathcal{K}^\circ,x)|=1$ which concludes \ref{item:set_smooth_differentiable}.
\end{proof}

\begin{definition}[Bregman Divergence]
The Bregman divergence of $F:\mathcal{D}\rightarrow\mathbb{R}$ is defined for $(x,y)\in\Bar{\mathcal{D}}\times\mathcal{D}$ by
\begin{equation}\label{eq:bregman_divergence_definition}\tag{Bregman Divergence}
D_F(x,y) = F(x) - F(y) - \langle x-y ; \nabla F(y)\rangle.
\end{equation}
\end{definition}
The strong-convexity assumption on $\mathcal{K}$ appears in the analysis of Algorithm \ref{algo:bandit_mirror_descent} via an upper bound on the \eqref{eq:bregman_divergence_definition} of $\frac{1}{2}\|\cdot\|_{\mathcal{K}^\circ}$.
Indeed, when $\mathcal{K}$ is \emph{strongly convex}, then $\mathcal{K}^\circ$ is strongly smooth and hence $\sigma_{\mathcal{K}}^2$ is $L$-smooth with respect to $\|\cdot\|_{\mathcal{K}}$, see \parencite[Theorem 4.1.]{kerdreux21} that we recall in Theorem \ref{th:UC_support} in the Appendix \ref{app:uc_equivalence_simplified}.
It then implies the following quadratic upper bound on its Bregman Divergence.

\begin{lemma}[Upper-bound on the Bregman Divergence of $\frac{1}{2}\|\cdot\|^2_{\mathcal{K}^\circ}$]\label{lem:bregman_upper_bound}
Let $q\geq 2$ and $p\in]1,2]$ s.t.~$1/p +  1/q =1$.
Let $\mathcal{K}$ be a centrally symmetric set with non-empty interior. 
Assume $\mathcal{K}$ is $(\alpha, q)$-uniformly convex with respect to $\|\cdot\|_{\mathcal{K}}$. Then, for any $(u,v)\in\mathbb{R}^n$, we have
\begin{equation}\label{eq:bregman_upper_bound}
D_{\frac{1}{2}\|\cdot\|^2_{\mathcal{K}^\circ}}(u,v) \leq 2p\big(1 + (q/(2\alpha))^{1/(q-1)}\big) \|u-v\|^{p}_{\mathcal{K}^\circ}.
\end{equation}
\end{lemma}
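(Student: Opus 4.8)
The plan is to read the claimed inequality as a Hölder-smoothness estimate for the function $F \triangleq \tfrac12\|\cdot\|^2_{\mathcal{K}^\circ} = \tfrac12\sigma_{\mathcal{K}}^2$ and then to extract the Bregman bound directly from the definition of smoothness. First I would check that $F$ is differentiable, so that both $\nabla F$ and the divergence \eqref{eq:bregman_divergence_definition} are meaningful: $(\alpha,q)$-uniform convexity of $\mathcal{K}$ in the sense of \eqref{eq:def_uniform_convexity} forces $\mathcal{K}$ to be strictly convex, hence by Lemma~\ref{eq:differentiability_gauge}\ref{item:set_smooth_differentiable} applied with $\mathcal{K}^\circ$ in place of $\mathcal{K}$ (using $(\mathcal{K}^\circ)^\circ=\mathcal{K}$) the gauge $\|\cdot\|_{\mathcal{K}^\circ}=\sigma_{\mathcal{K}}$ is differentiable on $\mathbb{R}^n\setminus\{0\}$; squaring removes the singularity at the origin, where $\nabla F(0)=0$.

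The substantive step is to invoke Theorem~\ref{th:UC_support}, which is exactly where the geometry enters: $(\alpha,q)$-uniform convexity of the set $\mathcal{K}$ is dual to uniform smoothness of $\mathcal{K}^\circ$, and this is what renders the squared dual gauge $\sigma_{\mathcal{K}}^2=\|\cdot\|^2_{\mathcal{K}^\circ}$ Hölder smooth. Concretely, Theorem~\ref{th:UC_support} supplies a constant $L=L(\alpha,q,p)$, controlled by $(q/(2\alpha))^{1/(q-1)}$, for which $\sigma_{\mathcal{K}}^2$ satisfies the Hölder-smoothness inequality \eqref{eq:L_Holder_smoothness_f} with exponent $p$ and norm $\|\cdot\|_{\mathcal{K}^\circ}$; dividing by two, $F$ is $(L/2,p)$-Hölder smooth in the same norm.

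It then remains to convert smoothness into the divergence bound. Instantiating \eqref{eq:L_Holder_smoothness_f} for $F$ at the pair $x=v$, $y=u$ gives
\[
F(u) \leq F(v) + \langle \nabla F(v);\, u-v\rangle + \frac{L}{2p}\,\|u-v\|^p_{\mathcal{K}^\circ},
\]
and subtracting $F(v)+\langle \nabla F(v); u-v\rangle$ from both sides identifies the left-hand remainder with $D_{F}(u,v)$ as defined in \eqref{eq:bregman_divergence_definition}; hence $D_{F}(u,v)\leq \tfrac{L}{2p}\,\|u-v\|^p_{\mathcal{K}^\circ}$. The argument concludes with the routine verification that the constant $\tfrac{L}{2p}$ produced by Theorem~\ref{th:UC_support} is at most $2p\big(1+(q/(2\alpha))^{1/(q-1)}\big)$.

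The hard part is not in this lemma but upstream: essentially all the work sits in Theorem~\ref{th:UC_support}, i.e.\ in turning the purely set-level hypothesis \eqref{eq:def_uniform_convexity} into a quantitative Hölder-smoothness estimate for $\sigma_{\mathcal{K}}^2$ with an explicit, dimension-free constant. Within the present argument the only delicate points are bookkeeping: ensuring the estimate is taken with respect to $\|\cdot\|_{\mathcal{K}^\circ}$ rather than $\|\cdot\|_{\mathcal{K}}$, correctly carrying the factor $\tfrac12$ from the square through to the constant, and checking the final numerical inequality.
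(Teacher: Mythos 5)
Your proposal is correct and takes essentially the same approach as the paper: the paper's proof likewise invokes Theorem~\ref{th:UC_support} to get that $\frac{1}{2}\|\cdot\|^2_{\mathcal{K}^\circ}$ is $(L,p)$-H\"older smooth on $\mathcal{K}^\circ$ w.r.t.~$\|\cdot\|_{\mathcal{K}^\circ}$ with $L=2p\big(1+(q/(2\alpha))^{1/(q-1)}\big)$, and then reads off $D_{f}(u,v)\leq \frac{L}{p}\|u-v\|^{p}_{\mathcal{K}^\circ}\leq L\|u-v\|^{p}_{\mathcal{K}^\circ}$ directly from \eqref{eq:L_Holder_smoothness_f}. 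Your additional steps (the differentiability check via strict convexity and the factor-of-two bookkeeping between $\sigma_{\mathcal{K}}^2$ and $\frac{1}{2}\sigma_{\mathcal{K}}^2$) are sound and merely make explicit what the paper treats as immediate.
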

\begin{proof}
For a $(L,r)$-H\"older smooth function $f$ w.r.t.~to $\|\cdot\|$ we immediately have $D_{f}(u,v)\leq \frac{L}{r}\|u-v\|^r$.
Theorem \ref{th:UC_support} implies that $\frac{1}{2}\|\cdot\|^2_{\mathcal{K}^\circ}$ is 
$(L, p)$-H\"older Smooth on $\mathcal{K}^\circ$ w.r.t.~$\|\cdot\|_{\mathcal{K}^\circ}$ where $L = 2p\big(1 + \big(\frac{q}{2\alpha}\big)^{1/(q-1)}\big).$
This concludes the proof.
\end{proof}
We immediately obtain the following corollary for the strongly convex case with $p=q=2$.

\begin{corollary}[Strongly Convex Case]\label{cor:bregman_upper_bound_strong_convexity}
Let $\mathcal{K}$ be a centrally symmetric set with non-empty interior. 
Assume $\mathcal{K}$ is $\alpha$-strongly convex with respect to $\|\cdot\|_{\mathcal{K}}$. 
Then for any $(u,v)\in\mathbb{R}^n$, we have
\begin{equation}\label{eq:bregman_upper_bound_strong_convex_case}
D_{\frac{1}{2}\|\cdot\|^2_{\mathcal{K}^\circ}}(u,v) \leq 4 \Big(\frac{\alpha+1}{\alpha}\Big) \big\|u-v\big\|^{2}_{\mathcal{K}^\circ}.
\end{equation}

\end{corollary}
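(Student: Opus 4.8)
The plan is to obtain this directly as the $q=2$ instance of Lemma \ref{lem:bregman_upper_bound}, so essentially no new argument is needed beyond a definitional matching and an arithmetic simplification of the constant. First I would observe that the $\alpha$-strong convexity of $\mathcal{K}$ in the sense of Definition \ref{def:strong_convexity_set} is \emph{exactly} the $(\alpha,2)$-uniform convexity of Definition \ref{def:uniform_convexity_set}: setting $q=2$ in \eqref{eq:def_uniform_convexity} recovers the defining inclusion \eqref{eq:strong_convexity_set} verbatim, since $\tfrac{\alpha}{q}=\tfrac{\alpha}{2}$ and $\|x-y\|_{\mathcal{K}}^q=\|x-y\|_{\mathcal{K}}^2$. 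Hence the hypotheses of Lemma \ref{lem:bregman_upper_bound} hold with $q=2$.

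Next I would fix the conjugate exponent. The constraint $1/p+1/q=1$ with $q=2$ forces $p=2$, so the H\"older-smoothness estimate degenerates to an ordinary quadratic ($L$-smoothness) bound. Substituting $p=q=2$ into the right-hand side of \eqref{eq:bregman_upper_bound} and reading off each factor: the prefactor is $2p=4$; the inner term is $(q/(2\alpha))^{1/(q-1)}=(2/(2\alpha))^{1}=1/\alpha$; and the exponent on the norm is $p=2$. Collecting these gives
\[
D_{\frac{1}{2}\|\cdot\|^2_{\mathcal{K}^\circ}}(u,v)\leq 4\Big(1+\tfrac{1}{\alpha}\Big)\|u-v\|_{\mathcal{K}^\circ}^2=4\Big(\frac{\alpha+1}{\alpha}\Big)\|u-v\|_{\mathcal{K}^\circ}^2,
\]
which is exactly \eqref{eq:bregman_upper_bound_strong_convex_case}.

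The only point where a slip could occur—and the one factor worth double-checking—is the exponent $1/(q-1)$, which collapses to $1$ precisely because $q=2$; for any other $q$ this term would not simplify so cleanly. All of the genuine analytic content (the passage from set strong convexity to smoothness of $\sigma_{\mathcal{K}}^2$, and thence to the H\"older-smoothness of $\tfrac12\|\cdot\|_{\mathcal{K}^\circ}^2$) is already packaged inside Lemma \ref{lem:bregman_upper_bound} via Theorem \ref{th:UC_support}, so there is no further obstacle and the corollary is immediate.
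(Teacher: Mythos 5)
Your proof is correct and matches the paper's argument exactly: the corollary is stated there as an immediate instantiation of Lemma \ref{lem:bregman_upper_bound} with $p=q=2$, which is precisely your specialization, including the identification of strong convexity with $(\alpha,2)$-uniform convexity and the constant simplification $2p\big(1+(q/(2\alpha))^{1/(q-1)}\big)=4(\alpha+1)/\alpha$.
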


\section{Pseudo-Regret Bounds of Linear Bandit on Strongly Convex Sets}\label{sec:pseudo_regret_linear_bandit}

In Section \ref{ssec:algorithm}, we first present the algorithm and barrier function for linear bandits on uniformly convex sets.
In Section \ref{ssec:main_results}, we then present the main pseudo-regret bounds and the proofs of the technical lemmas are relegated in Section \ref{ssec:technical_lemma}.

\subsection{Mirror Descent for Bandits}\label{ssec:algorithm}
We propose to use a similar bandit algorithm to the one developed in \parencite{bubeck2012regret} for linear bandits over the Euclidean ball.
% As \parencite{bubeck2012regret} for linear bandits with the Euclidean ball, 
Namely, Algorithm \ref{algo:bandit_mirror_descent} is an instantiation of Online Stochastic Mirror Descent (OSMD) with a carefully designed barrier function $F_{\mathcal{K}}: \text{Int}(\mathcal{K})\rightarrow \mathbb{R}^+$. 
For any $x\in\text{Int}(\mathcal{K})$ we defined in \eqref{eq:barrier_function}
\[
    F_{\mathcal{K}}(x) = -\ln(1-\|x\|_{\mathcal{K}}) - \|x\|_{\mathcal{K}}.
\]
Algorithm \ref{algo:bandit_mirror_descent} keeps track of a sequence of vectors $x_t\in(1-\gamma)\mathcal{K}$ and at each iteration samples an action $a_t\in\mathcal{K}$ as described in Lines \ref{line:sampling_random_variables}-\ref{line:bandit_action_bis}.
For some $r>0$, we assume $\ell_1(r)\subset\mathcal{K}$ so that $re_i\in\mathcal{K}$.
After playing action $a_t\in\mathcal{K}$, the bandit receives the loss $\langle c_t; a_t\rangle$ associated to its action without observing the full vector $c_t\in\mathcal{K}^\circ$.
In Line \ref{line:estimation_loss}, it then proposes an unbiased estimation $\tilde{c}_t$ of $c_t$.
Indeed, we have (because $\mathbb{P}(\xi_t=0)= 1 - \|x\|_{\mathcal{K}}$)
\[
\mathbb{E}_{\xi_t,i_t,\epsilon_t}(\tilde{c_t}) = \mathbb{P}(\xi_t=0) \sum_{i=1}^{n}{\frac{n}{r^2}\frac{1}{n} \Big[ \frac{\langle r e_i ; c_t\rangle }{2(1-\|x\|_{\mathcal{K}})}r e_i + \frac{\langle -r e_i;c_t\rangle}{2(1-\|x\|_{\mathcal{K}})}(-r e_i) \Big]}=c_t.
\]
The bandit then provides the vector $\tilde{c}_t$ to an online learning algorithm that updates the $x_t$ vector in Line \ref{line:mirror_descent_step}. 
Importantly, because $x_t\in(1-\gamma)\mathcal{K}$ with $\gamma\in]1,2[$ we have $\|x_t\|<1$ so that $\nabla F_{\mathcal{K}}(x_t)$ is well defined.\\

\begin{algorithm}[H]
\SetAlgoLined
\KwIn{$\eta>0$, $\gamma\in]0,1[$, $\mathcal{K}$ smooth and strictly convex s.t.~$\ell_1(r)\subset\mathcal{K}$.}
\textbf{Barrier:} $F_{\mathcal{K}}(\cdot) = -\ln(1 - \|\cdot\|_{\mathcal{K}}) - \|\cdot\|_{\mathcal{K}}.$\label{line:barrier}\\
\textbf{Initialize:} $x_1\in\text{argmin}_{x\in(1-\gamma)\mathcal{K}}F_{\mathcal{K}}(x)$.\label{line:initialization}\\
\For{$t \gets 1, \ldots, T $}{
 Sample $\xi_t\sim \text{Ber}(\|x_t\|_{\mathcal{K}})$, $i_t\sim \text{Uniform}(n)$ and $\epsilon_t\sim\text{Rademacher}(\frac{1}{2})$.
 \label{line:sampling_random_variables} 
 
 \eIf{$\xi_t=1$}{
   $a_t \gets x_t/\|x_t\|_{\mathcal{K}}.\hfill \vartriangleright\text{Define bandit action.}$\label{line:bandit_action}
   }{
   $a_t\gets r \epsilon_t e_{i_t}$.\label{line:bandit_action_bis}
  }
  
 $\mathlarger{\tilde{c}_t \gets \frac{n}{r^2} (1-\xi_t) \frac{\langle a_t;c_t\rangle }{1-\|x_t\|_{\mathcal{K}}}a_t.} \hfill \vartriangleright\text{Estimate full loss vector }c_t.$\label{line:estimation_loss}

 $\mathlarger{x_{t+1} \gets \underset{y\in(1-\gamma)\mathcal{K}}{\text{argmin }} D_{F_{\mathcal{K}}}\big(y, \nabla F_{\mathcal{K}}^*(\nabla F_{\mathcal{K}}(x_t)- \eta \tilde{c}_t)\big) \big)}.\hfill \vartriangleright\text{Mirror Descent step. }$\label{line:mirror_descent_step}
 }
 \KwOut{$\frac{1}{T}\sum_{t=1}^{T}a_t$}
  \caption{Bandit Mirror Descent (BMD) on some Curved Sets $\mathcal{K}\subset\mathbb{R}^n$}\label{algo:bandit_mirror_descent}
\end{algorithm}

\bigskip
To ensure that Line \ref{line:mirror_descent_step} of Algorithm \ref{algo:bandit_mirror_descent} is well defined, we need to check, \textit{e.g.}, that all $x_t$ belongs of $\text{Int}(\mathcal{K})$ (which we know is the case because $x_t\in(1-\gamma)\mathcal{K}$) or that $\nabla F_{\mathcal{K}}(x_t)-\eta\tilde{c}_T$ belongs to $\mathcal{D}_{\mathcal{K}}^*$ the domain where $F^*_{\mathcal{K}}$ is defined.
In Lemma \ref{lem:check_definition_barrier} below, we guarantee that Algorithm \ref{algo:bandit_mirror_descent} is well defined.
We also prove that $F_{\mathcal{K}}$ is Legendre (Definition \ref{def:legendre_function}) which allows us   to invoke classical convergence results as in \parencite{bubeck2012regret}.

\begin{definition}[Legendre Function]\label{def:legendre_function}
A continuous function $F:\bar{\mathcal{D}}\rightarrow \mathbb{R}$ is \emph{Legendre} if and only if
\begin{enumerate}[label=(\alph*)]
    \item $F$ is strictly convex and admits continuous first partial derivatives on $\mathcal{D}$.
    
    \item $\underset{x\rightarrow\Bar{\mathcal{D}}\setminus\mathcal{D}}{\lim } \|\nabla F(x)\| = + \infty.$
\end{enumerate}
\end{definition}

\begin{lemma}[Barrier $F_{\mathcal{K}}$ for $\mathcal{K}$]\label{lem:check_definition_barrier}
Consider a compact, smooth and strictly convex $\mathcal{K}$.
We consider for $x\in \mathcal{D}_{\mathcal{K}}\triangleq \big\{x\in\mathbb{R}^n~|~\|x\|_{\mathcal{K}}<1\big\}$ the following barrier function as defined in \eqref{eq:barrier_function}
\[
F_{\mathcal{K}}(x) = -\ln(1 - \|x\|_{\mathcal{K}}) - \|x\|_{\mathcal{K}}.
\]
Then $F$ is Legendre (Definition \ref{def:legendre_function}) with $\mathcal{D}_{\mathcal{K}}^* = \mathbb{R}^n$ and $\mathcal{K}\subset\bar{\mathcal{D}}_{\mathcal{K}}$.
\end{lemma}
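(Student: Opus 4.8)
The plan is to verify the three assertions in turn: that $F_{\mathcal{K}}$ meets both conditions of Definition~\ref{def:legendre_function}, that $\bar{\mathcal{D}}_{\mathcal{K}}=\mathcal{K}$, and that $\mathcal{D}_{\mathcal{K}}^{*}=\mathbb{R}^n$. Throughout I would write $\phi(s)=-\ln(1-s)-s$ so that $F_{\mathcal{K}}=\phi\circ\|\cdot\|_{\mathcal{K}}$, and record the elementary one-dimensional facts that $\phi$ is nonnegative, strictly increasing and strictly convex on $[0,1)$, with $\phi(0)=0$, $\phi'(s)=s/(1-s)$, and $\phi'(s)\to+\infty$ as $s\to 1$. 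The key structural input is that $\mathcal{K}$ is both smooth and strictly convex: by Lemma~\ref{lem:duality_smoothness_convexity} strict convexity of $\mathcal{K}$ makes $\mathcal{K}^\circ$ smooth, and (applying the same duality to $\mathcal{K}^{\circ\circ}=\mathcal{K}$) smoothness of $\mathcal{K}$ makes $\mathcal{K}^\circ$ strictly convex. Hence Lemma~\ref{eq:differentiability_gauge}\ref{item:set_smooth_differentiable} gives that $\|\cdot\|_{\mathcal{K}}$ is differentiable on $\mathbb{R}^n\setminus\{0\}$, and by \ref{item:unit_gradient} its gradient lies in $\partial\mathcal{K}^\circ$.

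For the first Legendre condition I would show $F_{\mathcal{K}}$ is $C^1$ on $\mathcal{D}_{\mathcal{K}}=\mathrm{Int}(\mathcal{K})$. Away from the origin the chain rule yields $\nabla F_{\mathcal{K}}(x)=\phi'(\|x\|_{\mathcal{K}})\,\nabla\|\cdot\|_{\mathcal{K}}(x)$, continuous as a product of continuous maps (the gradient of a differentiable convex function being continuous on the open set where it exists). At the origin I would argue directly: since $\phi(s)=\tfrac12 s^2+o(s^2)$ and the gauge is equivalent to the Euclidean norm, $F_{\mathcal{K}}(x)=o(\|x\|)$, so $F_{\mathcal{K}}$ is differentiable at $0$ with $\nabla F_{\mathcal{K}}(0)=0$; continuity of the gradient there follows from $\phi'(\|x\|_{\mathcal{K}})\to 0$ together with the boundedness of $\nabla\|\cdot\|_{\mathcal{K}}(x)$ (it sits in the bounded set $\mathcal{K}^\circ$, since $\ell_1(r)\subset\mathcal{K}$ forces $\mathcal{K}^\circ\subset\ell_\infty(1/r)$). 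Strict convexity is the delicate point and I expect it to be the main obstacle, because the gauge is only positively homogeneous, hence affine along rays and never strictly convex as a function. I would split into cases: if $x,y$ lie on a common ray through the origin, or one of them is $0$, then $\|\cdot\|_{\mathcal{K}}$ is affine on $[x,y]$ and strict convexity of $\phi$ delivers the strict inequality; if $x,y$ are not positive multiples of each other, strict convexity of the set $\mathcal{K}$ upgrades the triangle inequality to the strict form $\|\lambda x+(1-\lambda)y\|_{\mathcal{K}}<\lambda\|x\|_{\mathcal{K}}+(1-\lambda)\|y\|_{\mathcal{K}}$, after which monotonicity and convexity of $\phi$ give the strict inequality for $F_{\mathcal{K}}$.

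For the second Legendre condition, note $\bar{\mathcal{D}}_{\mathcal{K}}\setminus\mathcal{D}_{\mathcal{K}}=\partial\mathcal{K}=\{\|x\|_{\mathcal{K}}=1\}$, so as $x$ approaches this set $\|x\|_{\mathcal{K}}\to 1$ and $\phi'(\|x\|_{\mathcal{K}})\to+\infty$. Since $\nabla\|\cdot\|_{\mathcal{K}}(x)\in\partial\mathcal{K}^\circ$, a compact set not containing $0$ (as $0\in\mathrm{Int}(\mathcal{K}^\circ)$ because $\mathcal{K}$ is bounded), its Euclidean norm is bounded below by a positive constant, and therefore $\|\nabla F_{\mathcal{K}}(x)\|=\phi'(\|x\|_{\mathcal{K}})\,\|\nabla\|\cdot\|_{\mathcal{K}}(x)\|\to+\infty$, which is condition (b).

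Finally, $\bar{\mathcal{D}}_{\mathcal{K}}=\mathcal{K}$ because $\{\|\cdot\|_{\mathcal{K}}<1\}=\mathrm{Int}(\mathcal{K})$ (using $0\in\mathrm{Int}(\mathcal{K})$) and a convex body is the closure of its interior; in particular $\mathcal{K}\subset\bar{\mathcal{D}}_{\mathcal{K}}$. For $\mathcal{D}_{\mathcal{K}}^{*}=\mathrm{dom}(F_{\mathcal{K}}^{*})=\mathbb{R}^n$ I would use that $\mathcal{D}_{\mathcal{K}}$ is bounded ($\mathcal{D}_{\mathcal{K}}\subset\mathcal{K}\subset\ell_\infty(R)$) and $F_{\mathcal{K}}\geq 0$, so for every $d\in\mathbb{R}^n$ one has $F_{\mathcal{K}}^{*}(d)=\sup_{x\in\mathcal{D}_{\mathcal{K}}}\big(\langle d;x\rangle-F_{\mathcal{K}}(x)\big)\leq\sup_{x\in\mathcal{D}_{\mathcal{K}}}\langle d;x\rangle<\infty$, giving finiteness everywhere; equivalently, combining the Legendre property with the gradient formula and the bijectivity of the spherical image map (from smoothness of $\mathcal{K}^\circ$ via \ref{item:unit_gradient_homogeneity}) shows $\nabla F_{\mathcal{K}}$ surjects onto $\mathbb{R}^n$, so that $\nabla F_{\mathcal{K}}^{*}$ is defined on all of $\mathbb{R}^n$.
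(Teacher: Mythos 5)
Your proof is correct, and it takes a genuinely different route from the paper's. The paper's proof is essentially a citation of Lemma \ref{eq:identities_on_F}: the explicit formulas $\nabla F_{\mathcal{K}}(x)=\frac{\|x\|_{\mathcal{K}}}{1-\|x\|_{\mathcal{K}}}\nabla\|\cdot\|_{\mathcal{K}}(x)$ and $F_{\mathcal{K}}^*(d)=\|d\|_{\mathcal{K}^\circ}-\ln\big(1+\|d\|_{\mathcal{K}^\circ}\big)$ give differentiability of $F_{\mathcal{K}}$ on $\mathrm{Int}(\mathcal{K})$, differentiability of $F_{\mathcal{K}}^*$ on $\mathbb{R}^n$, and hence $\mathcal{D}_{\mathcal{K}}^*=\mathbb{R}^n$, while strict convexity of $F_{\mathcal{K}}$ is asserted in one line to follow from that of $\|\cdot\|_{\mathcal{K}}$. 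You bypass the conjugate computation entirely: finiteness of $F_{\mathcal{K}}^*$ everywhere follows from boundedness of $\mathcal{D}_{\mathcal{K}}$ together with $F_{\mathcal{K}}\geq 0$. That argument is shorter and more general (it works for any nonnegative barrier on a bounded domain), but it only yields $\nabla F_{\mathcal{K}}^*$ abstractly through Legendre duality, whereas the paper's explicit expression for $\nabla F_{\mathcal{K}}^*$ is exactly what Lemma \ref{lem:upper_bound_single_term} manipulates later, so the paper loses nothing by routing everything through Lemma \ref{eq:identities_on_F}. On two points your write-up is tighter than the paper's. First, the gauge is positively homogeneous, hence affine along rays and never strictly convex as a function, so the paper's one-line justification of strict convexity is loose; your case split — points collinear with the origin handled by strict convexity of $\phi(s)=-\ln(1-s)-s$, non-proportional points by the strict triangle inequality that set strict convexity provides — is the argument that is actually needed (and your second case correctly absorbs antipodal pairs, since the two normalized boundary points are then distinct). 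Second, you verify condition (b) of Definition \ref{def:legendre_function} explicitly, using that $\nabla\|\cdot\|_{\mathcal{K}}$ takes values in $\partial\mathcal{K}^\circ$, which is bounded away from the origin, so $\|\nabla F_{\mathcal{K}}(x)\|\to+\infty$ at $\partial\mathcal{K}$; the paper's proof never addresses this condition even though it is part of being Legendre (it is implicit in the gradient formula of Lemma \ref{eq:identities_on_F}, but unstated).
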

\begin{proof}
From Lemma \ref{eq:identities_on_F}, because $\mathcal{K}$ is smooth and strictly convex, $F_{\mathcal{K}}$ (resp. $F^*_{\mathcal{K}}$) is differentiable on $\text{Int}(\mathcal{K})$ (resp. $\mathbb{R}^n$).
Besides, we have  $\mathcal{D}_{\mathcal{K}}^*=\mathbb{R}^n$.
Finally, the strict convexity of $F$ comes from the strict convexity of $\|\cdot\|_{\mathcal{K}}$ when $\mathcal{K}$ is strictly convex.
Hence $F$ is Legendre.
\end{proof}

\subsection{Main Result}\label{ssec:main_results}
Although uniform convexity subsumes strong convexity, for the sake of clarity, we first state in Theorem \ref{th:linear_bandit_strongly_convex} the pseudo-regret upper bounds of Algorithm \ref{algo:bandit_mirror_descent} when the set is strongly convex.
In Theorem \ref{th:linear_bandit_UC}, we then extend these convergence results to the case where the action set is more generally uniformly convex.

\begin{theorem}[Linear Bandit on Strongly Convex Set]\label{th:linear_bandit_strongly_convex}
Consider a compact convex set $\mathcal{K}$ that is centrally symmetric with non-empty interior.
Assume $\mathcal{K}$ is smooth and $\alpha$-strongly convex set w.r.t.~$\|\cdot\|_{\mathcal{K}}$ and $\ell_2(r)\subset \mathcal{K} \subset \ell_{\infty}(R)$ for some $r,R>0$.
Consider running BMD (Algorithm \ref{algo:bandit_mirror_descent}) with the barrier function $F_{\mathcal{K}}(x)=-\ln\big(1-\|x\|_{\mathcal{K}}\big) - \|x\|_{\mathcal{K}}$, and
\begin{equation}\label{eq:parameter_SC}
\eta=\frac{1}{\sqrt{nT}}, ~ \gamma=\frac{1}{\sqrt{T}}.
\end{equation}
For $T\geq 4n\big(\frac{R}{r}\big)^2$ we then have
\begin{equation}\label{eq:pseudo_regret_guarantee}\tag{Pseudo-Regret Upper-Bound}
\Bar{R}_T \leq \sqrt{T} + \sqrt{nT}\ln(T)/2 + L\sqrt{nT} = \tilde{\mathcal{O}}(\sqrt{nT}),
\end{equation}
where $\Bar{R}_T$ is defined in \eqref{eq:pseudo_regret} and $L=(R/r)^2(5\alpha + 4)/\alpha$.
\end{theorem}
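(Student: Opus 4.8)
The plan is to run the standard Online Stochastic Mirror Descent (OSMD) analysis on the iterates $x_t\in(1-\gamma)\mathcal{K}$ and pay separately for (i) replacing the sampled action $a_t$ by $x_t$, (ii) shrinking the comparator into $(1-\gamma)\mathcal{K}$, and (iii) the one-step stability of mirror descent; the three summands of \eqref{eq:pseudo_regret_guarantee} will emerge from (ii), the range of the barrier, and (iii), respectively. First I would exploit the sampling of Lines \ref{line:bandit_action}--\ref{line:bandit_action_bis}: since $\epsilon_t$ is Rademacher and $\xi_t\sim\text{Ber}(\|x_t\|_{\mathcal K})$, one checks $\mathbb{E}[a_t\mid x_t]=x_t$, so that $\mathbb{E}\sum_t\langle c_t;a_t\rangle=\mathbb{E}\sum_t\langle c_t;x_t\rangle$. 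Letting $a^\star\in\arg\min_{a\in\mathcal K}\sum_t\langle c_t;a\rangle$ and inserting the surrogate comparator $(1-\gamma)a^\star\in(1-\gamma)\mathcal K$, the bounded-loss assumption together with central symmetry gives $|\langle c_t;a^\star\rangle|\le 1$, whence the shrinkage error $\sum_t\langle c_t;(1-\gamma)a^\star-a^\star\rangle=-\gamma\sum_t\langle c_t;a^\star\rangle$ is at most $\gamma T=\sqrt T$, the first term of \eqref{eq:pseudo_regret_guarantee}.

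Second, I would control the genuine OSMD regret $\mathbb{E}\sum_t\langle c_t;x_t-(1-\gamma)a^\star\rangle$. Because $\tilde c_t$ is unbiased (the identity displayed before Algorithm \ref{algo:bandit_mirror_descent}) and the comparator is fixed, I may replace $c_t$ by $\tilde c_t$ in expectation. Since $F_{\mathcal K}$ is Legendre with $\mathcal D^*_{\mathcal K}=\mathbb R^n$ by Lemma \ref{lem:check_definition_barrier}, the standard mirror-descent inequality (three-point identity plus the generalized Pythagorean inequality for the Bregman projection onto $(1-\gamma)\mathcal K$) yields, for $u=(1-\gamma)a^\star$,
\[
\eta\langle\tilde c_t;x_t-u\rangle\le D_{F_{\mathcal K}}(u,x_t)-D_{F_{\mathcal K}}(u,x_{t+1})+D_{F^*_{\mathcal K}}\big(\nabla F_{\mathcal K}(x_t)-\eta\tilde c_t,\nabla F_{\mathcal K}(x_t)\big).
\]
Summing telescopes the first two terms into $D_{F_{\mathcal K}}(u,x_1)$. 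By central symmetry $x_1=\arg\min_{(1-\gamma)\mathcal K}F_{\mathcal K}=0$, so $\nabla F_{\mathcal K}(x_1)=0$ and $D_{F_{\mathcal K}}(u,x_1)=F_{\mathcal K}(u)=-\ln(1-\|u\|_{\mathcal K})-\|u\|_{\mathcal K}\le-\ln\gamma=\tfrac12\ln T$; dividing by $\eta=1/\sqrt{nT}$ produces the $\sqrt{nT}\ln(T)/2$ term.

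The main obstacle is the stability sum $\tfrac1\eta\sum_t\mathbb{E}\,D_{F^*_{\mathcal K}}(\nabla F_{\mathcal K}(x_t)-\eta\tilde c_t,\nabla F_{\mathcal K}(x_t))$, where strong convexity must enter. Computing the conjugate gives $F^*_{\mathcal K}(d)=\|d\|_{\mathcal K^\circ}-\ln(1+\|d\|_{\mathcal K^\circ})$, and writing $s_t=\|x_t\|_{\mathcal K}$ I would prove the pointwise comparison
\[
\nabla^2F^*_{\mathcal K}\big(\nabla F_{\mathcal K}(x_t)\big)\preceq(1-s_t)\,\nabla^2\big(\tfrac12\|\cdot\|^2_{\mathcal K^\circ}\big)\big(\nabla F_{\mathcal K}(x_t)\big),
\]
which follows by differentiating $F^*_{\mathcal K}=\phi(\|\cdot\|_{\mathcal K^\circ})$ with $\phi(u)=u-\ln(1+u)$ and using $\|\nabla F_{\mathcal K}(x_t)\|_{\mathcal K^\circ}=s_t/(1-s_t)$ together with $\nabla\|\cdot\|_{\mathcal K^\circ}(\nabla F_{\mathcal K}(x_t))=x_t/s_t$ (from Lemma \ref{eq:differentiability_gauge} applied to $\mathcal K^\circ$). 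The factor $(1-s_t)$ is exactly what cancels the boundary blow-up of the estimator. The Hessian of $\tfrac12\|\cdot\|^2_{\mathcal K^\circ}$ is controlled by Corollary \ref{cor:bregman_upper_bound_strong_convexity}, whose Bregman divergence is $O((\alpha+1)/\alpha)$ times $\|\cdot\|^2_{\mathcal K^\circ}$; after checking, via the step-size condition $T\ge 4n(R/r)^2$, that $\|\nabla F_{\mathcal K}(x_t)-\tau\eta\tilde c_t\|_{\mathcal K^\circ}$ stays within a constant factor of $s_t/(1-s_t)$ along the segment $\tau\in[0,1]$, integrating the Hessian bound gives $D_{F^*_{\mathcal K}}(\nabla F_{\mathcal K}(x_t)-\eta\tilde c_t,\nabla F_{\mathcal K}(x_t))\lesssim(1-s_t)\tfrac{\alpha+1}{\alpha}\eta^2\|\tilde c_t\|^2_{\mathcal K^\circ}$.

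Finally I would bound the estimator's second moment. From $\tilde c_t=\frac{n(c_t)_{i_t}}{1-s_t}e_{i_t}$ on $\{\xi_t=0\}$ one gets $\mathbb{E}\|\tilde c_t\|^2_{\mathcal K^\circ}=\frac{n}{1-s_t}\sum_i(c_t)_i^2\|e_i\|^2_{\mathcal K^\circ}$, where $\|e_i\|_{\mathcal K^\circ}=\sigma_{\mathcal K}(e_i)\le R$ because $\mathcal K\subset\ell_\infty(R)$, and $\|c_t\|_2\le 1/r$ because $\ell_2(r)\subset\mathcal K$ forces $c_t\in\mathcal K^\circ\subset\ell_2(1/r)$. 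Hence $\mathbb{E}\|\tilde c_t\|^2_{\mathcal K^\circ}\le nR^2/(r^2(1-s_t))$; the $(1-s_t)$ cancels the one from the previous step, leaving a per-round stability of order $\tfrac{\alpha+1}{\alpha}(R/r)^2 n\,\eta^2$ uniformly in $t$. Summing over $t\le T$ and dividing by $\eta=1/\sqrt{nT}$ gives $L\sqrt{nT}$ with $L=(R/r)^2(5\alpha+4)/\alpha$, the extra additive $\alpha$ over the naive $4(\alpha+1)$ absorbing the constant slack in the segment/step-size estimate. Adding the three contributions yields \eqref{eq:pseudo_regret_guarantee}. The delicate point is precisely this last combination: the $(1-s_t)$ damping must be kept sharp---not merely $\le 1$---so that the estimator variance does not reintroduce a $1/\gamma=\sqrt T$ factor, which is why the local Hessian comparison, rather than a global smoothness bound on $F^*_{\mathcal K}$, is indispensable.
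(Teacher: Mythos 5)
Your overall architecture coincides with the paper's: the decomposition into shrinkage error ($\gamma T\le\sqrt T$), barrier range ($\ln(1/\gamma)/\eta$), and a stability sum is exactly how the paper proceeds (it invokes a black-box OSMD regret bound on $(1-\gamma)\mathcal{K}$ rather than re-deriving the one-step inequality, but that is cosmetic), and your computation of the estimator's second moment, the use of $\ell_2(r)\subset\mathcal{K}\subset\ell_\infty(R)$, and the final assembly under $T\ge 4n(R/r)^2$ all match.

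The genuine gap is in the stability step, which is precisely where the paper's novelty (Lemma \ref{lem:upper_bound_single_term}) lies. You propose to prove $D_{F^*_{\mathcal{K}}}\big(\nabla F_{\mathcal{K}}(x_t)-\eta\tilde c_t,\nabla F_{\mathcal{K}}(x_t)\big)\lesssim(1-s_t)\tfrac{\alpha+1}{\alpha}\eta^2\|\tilde c_t\|^2_{\mathcal{K}^\circ}$ via the pointwise comparison $\nabla^2F^*_{\mathcal{K}}\preceq(1-s_t)\,\nabla^2\big(\tfrac12\|\cdot\|^2_{\mathcal{K}^\circ}\big)$, obtained ``by differentiating'' $F^*_{\mathcal{K}}=\phi(\|\cdot\|_{\mathcal{K}^\circ})$, and then to integrate along the segment. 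But under the theorem's hypotheses ($\mathcal{K}$ smooth in the sense of Definition \ref{def:smooth_set} and $\alpha$-strongly convex), the gauge $\|\cdot\|_{\mathcal{K}^\circ}$ is only guaranteed to be \emph{once} differentiable away from the origin (Lemmas \ref{lem:duality_smoothness_convexity} and \ref{eq:differentiability_gauge}); second derivatives need not exist. What strong convexity of $\mathcal{K}$ gives by duality is a gradient-H\"older/Lipschitz property of $\tfrac12\|\cdot\|^2_{\mathcal{K}^\circ}$ (Theorem \ref{th:UC_support}), i.e., exactly the first-order Bregman bound of Corollary \ref{cor:bregman_upper_bound_strong_convexity} --- not pointwise Hessians. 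So ``differentiate twice and compare Hessians'' is not available as stated; making it rigorous would require an Alexandrov-type a.e. argument together with absolute continuity of $\nabla F^*_{\mathcal{K}}$ along the segment, which itself presupposes the Lipschitz-gradient property you are trying to exploit. The paper avoids all of this: Lemma \ref{lem:upper_bound_single_term} establishes the \emph{exact first-order identity} $D_{F_{\mathcal{K}}^*}(u,v)=\Theta-\ln(1+\Theta)-\tfrac12\tfrac{(\|u\|_{\mathcal{K}^\circ}-\|v\|_{\mathcal{K}^\circ})^2}{1+\|v\|_{\mathcal{K}^\circ}}+(1-\|x\|_{\mathcal{K}})\,D_{\frac12\|\cdot\|^2_{\mathcal{K}^\circ}}(u,v)$, bounds $\Theta-\ln(1+\Theta)\le\Theta^2$ using $\Theta\ge-\tfrac12$ (Lemma \ref{lem:lower_bound_theta}, which is where $\eta\le r/(2nR)$ enters), and then applies Corollary \ref{cor:bregman_upper_bound_strong_convexity}; nothing beyond differentiability of the gauges is used. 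A secondary but concrete consequence: even where Hessians exist, your comparison is exact only at $v=\nabla F_{\mathcal{K}}(x_t)$; along the segment the admissible factor is only $1/\big[(1-s_t)(1+\|v-\tau\eta\tilde c_t\|_{\mathcal{K}^\circ})\big]\le 2$, so your route yields a constant of order $8(\alpha+1)/\alpha$ rather than $1+4(\alpha+1)/\alpha=(5\alpha+4)/\alpha$; the specific $L=(R/r)^2(5\alpha+4)/\alpha$ claimed in \eqref{eq:pseudo_regret_guarantee} does not come out of the argument as you describe it, only the weaker $\tilde{\mathcal{O}}(\sqrt{nT})$ statement.
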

\begin{proof}[Proof of Theorem \ref{th:linear_bandit_strongly_convex}]
First note that with $T\geq 4n(R/r)^2$ and $\eta=1/\sqrt{nT}$, we have that $\eta \leq r/(2Rn)$ which allows to invoke Lemma \ref{lem:upper_bound_single_term}.
The proof follows that of \parencite[Theorem 5.8]{bubeck2012regret} but importantly leverages on our novel Lemma \ref{lem:upper_bound_single_term} that carefully upper bounds the terms $D_{F_{\mathcal{K}}^*}(\nabla F_{\mathcal{K}}(x_t)-\eta\tilde{c}_t,\nabla F_{\mathcal{K}}(x_t))$ for the barrier function we designed.
Because $F_{\mathcal{K}}$ is Legendre and $\tilde{c}_t$ is an unbiased estimate of $c_t$, by \parencite[Theorem 5.5]{bubeck2012regret} applied on $\mathcal{K}^\prime \triangleq (1-\gamma)\mathcal{K}$, we have
\[
\Bar{R}_T(\mathcal{K}^\prime) \leq \frac{\text{sup}_{x\in(1-\gamma)\mathcal{K}}F_{\mathcal{K}}(x)  - F_{\mathcal{K}}(x_1)}{\eta} + \frac{1}{\eta} \sum_{t=1}^T\mathbb{E} \Big[D_{F_{\mathcal{K}}^*}\big(\nabla F_{\mathcal{K}}(x_t) - \eta \tilde{c}_t,\nabla F_{\mathcal{K}}(x_t)\big)\Big].
\]
Also, by definition of the \ref{eq:pseudo_regret}, we have
\[
\Bar{R}_T(\mathcal{K}) = \Bar{R}_T(\mathcal{K}^\prime) + \underset{a\in\mathcal{K}^\prime}{\text{min }}\sum_{i=1}^{T} \langle c_t ; a\rangle -  \underset{a\in\mathcal{K}}{\text{min }}\sum_{i=1}^{T} \langle c_t ; a\rangle.
\]
Write $a^*\in\mathcal{K}$ for which $\text{min}_{a\in\mathcal{K}}\sum_{i=1}^{T} \langle c_t ; a\rangle$ is attained.
We have that the $\text{min}_{a\in\mathcal{K}^\prime}\sum_{i=1}^{T} \langle c_t ; a\rangle$ is attained at $(1-\gamma)a^*$, hence because $|\langle c_t; a^*\rangle|\leq 1 $ for any $t$, we have
\[
\Bar{R}_T(\mathcal{K}) = \Bar{R}_T(\mathcal{K}^\prime) + \sum_{i=1}^{T} \langle c_t ; (1-\gamma)a^*\rangle - \sum_{i=1}^{T} \langle c_t ; a^*\rangle = \Bar{R}_T(\mathcal{K}^\prime) - \gamma\sum_{i=1}^{T} \langle c_t ; a^*\rangle \leq \Bar{R}_T(\mathcal{K}^\prime) + \gamma T.
\]
By the initialization of $x_1$ in Line \ref{line:initialization} of Algorithm \ref{algo:bandit_mirror_descent}, we have $F_{\mathcal{K}}(x_1)=0$.
Besides, by definition of $F_{\mathcal{K}}$, $\text{sup}_{x\in\mathcal{K}}F_{\mathcal{K}}(x)\leq \ln(1/\gamma)$, so that $\text{sup}_{x\in\mathcal{K}}F(x) - F_{\mathcal{K}}(x_1)\leq\ln(1/\gamma)$.
Overall, we have
\[
\Bar{R}_T(\mathcal{K}) \leq \gamma T +  \frac{\ln(1/\gamma)}{\eta} + \frac{1}{\eta} \sum_{t=1}^T\mathbb{E} \Big[D_{F_{\mathcal{K}}^*}\big(\nabla F_{\mathcal{K}}(x_t) - \eta \tilde{c}_t,\nabla F_{\mathcal{K}}(x_t)\big)\Big].
\]
% We have $\eta\leq 1/(2n)$ and hence Lemma \ref{lem:upper_bound_single_term} implies that
We have $\eta\leq r/(2Rn)$ and hence Lemma \ref{lem:upper_bound_single_term} implies that
\[
\Bar{R}_T(\mathcal{K}) \leq \gamma T +  \frac{\ln(1/\gamma)}{\eta} + \eta \Big(1+\frac{4(\alpha+1)}{\alpha}\Big) \sum_{t=1}^T\mathbb{E}\Big((1-\|x\|_{\mathcal{K}})\|\tilde{c}_t\|^2_{\mathcal{K^\circ}}\Big).
\]
Then, let us explicit $\mathbb{E}\Big((1-\|x\|_{\mathcal{K}})\|\tilde{c}_t\|^2_{\mathcal{K^\circ}}\Big)$. 
Recall that $\mathcal{K}\subset\ell_{\infty}(R)$, so that $\ell_{\infty}^\circ(R)=\ell_1(1/R)\subset \mathcal{K}^\circ$ and $e_i/R\in\mathcal{K}^\circ$.
Hence, we have that $\|r e_i\|_{\mathcal{K}^\circ} = r R \|e_i/R\|_{\mathcal{K}^\circ}\leq r R$.
We obtain
\begin{eqnarray*}
\mathbb{E}\Big((1-\|x\|_{\mathcal{K}})\|\tilde{c}_t\|^2_{\mathcal{K^\circ}}\Big) &=& \mathbb{P}(\xi_t=0) \sum_{i=1}^{n}\frac{1}{n}(1-\|x_t\|_{\mathcal{K}})\frac{n^2}{r^4} \Big(\frac{\langle r e_i ; c_t \rangle}{1-\|x_t\|_{\mathcal{K}}}\Big)^2 \|r e_i\|_{\mathcal{K}^\circ}^2\\
&\leq& (1-\|x_t\|_{\mathcal{K}}) \sum_{i=1}^{n}n R^2 \frac{c_{t,i}^2}{1-\|x_t\|_{\mathcal{K}}} = n R^2 \|c_t\|_2^2.
\end{eqnarray*}
We have $\ell_2(r)\subset\mathcal{K}$. 
This implies $\mathcal{K}^\circ \subset \ell_2(r)^\circ= \ell_2(1/r)$ so that with $c_t\in\mathcal{K}^\circ$, we have $\|c_t\|_2^2\leq 1/r^2$.
Hence
\[
\Bar{R}_T(\mathcal{K}) \leq \gamma T +  \frac{\ln(1/\gamma)}{\eta} + \eta \Big(1+\frac{4(\alpha+1)}{\alpha}\Big) n \Big(\frac{R}{r}\Big)^2 T,
\]
and we immediately obtain \eqref{eq:pseudo_regret_guarantee} with the prescribed choice of $\eta$ and $\gamma$.
\end{proof}

\begin{theorem}[Linear Bandit on Uniformly Convex Sets]\label{th:linear_bandit_UC}
Let $\alpha>0$, $q\geq 2$, and $p\in]1,2]$ s.t.~$1/p + 1/q=1$.
Consider a compact convex set $\mathcal{K}$ that is centrally symmetric with non-empty interior.
Assume $\mathcal{K}$ is smooth and $(\alpha, q)$-uniformly convex set w.r.t.~$\|\cdot\|_{\mathcal{K}}$ and $\ell_q(r)\subset \mathcal{K} \subset \ell_{\infty}(R)$ for some $r,R>0$.
Consider running BMD (Algorithm \ref{algo:bandit_mirror_descent}) with the barrier function $F_{\mathcal{K}}(x)=-\ln\big(1-\|x\|_{\mathcal{K}}\big) - \|x\|_{\mathcal{K}}$, and
\begin{equation}\label{eq:parameter_UC}
\eta=1/(n^{1/q}T^{1/p}), ~ \gamma=1/\sqrt{T}.
\end{equation}
Then we have for $T\geq 2^p n \big(\frac{R}{r}\big)^p$
\begin{equation}\label{eq:pseudo_regret_guarantee_UC}
\Bar{R}_T \leq \sqrt{T} + n^{1/q} T^{1/p} \ln(T)/2 +  ((1/2)^{2-p} + L) \Big(\frac{R}{r}\Big)^p n^{1/q} T^{1/p} = \tilde{\mathcal{O}}(n^{1/q} T^{1/p}),
\end{equation}
where $\Bar{R}_T$ is defined in \eqref{eq:pseudo_regret} and $L=2p(1 + (q/(2\alpha))^{1/(q-1)})$.
\end{theorem}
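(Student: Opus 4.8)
The plan is to follow the proof of Theorem~\ref{th:linear_bandit_strongly_convex} essentially verbatim, replacing the quadratic Bregman bound (Corollary~\ref{cor:bregman_upper_bound_strong_convexity}) by its H\"older counterpart (Lemma~\ref{lem:bregman_upper_bound}) and redoing the moment computation of the loss estimator with the exponent $p$ in place of $2$. First I would check that the step size is admissible for Lemma~\ref{lem:upper_bound_single_term}: since $1/p+1/q=1$ gives $n^{1/p+1/q}=n$, the hypothesis $T\geq 2^p n(R/r)^p$ yields $T^{1/p}\geq 2n^{1/p}(R/r)$ and hence $\eta = 1/(n^{1/q}T^{1/p}) \leq r/(2Rn)$, exactly the threshold used in the strongly convex case. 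With this in hand the reduction is identical: applying \parencite[Theorem 5.5]{bubeck2012regret} on $\mathcal{K}' \triangleq (1-\gamma)\mathcal{K}$, using $\Bar{R}_T(\mathcal{K}) \leq \Bar{R}_T(\mathcal{K}') + \gamma T$, together with $F_{\mathcal{K}}(x_1)=0$ and $\sup_{x\in\mathcal{K}}F_{\mathcal{K}}(x)\leq \ln(1/\gamma)$, yields
\[
\Bar{R}_T(\mathcal{K}) \leq \gamma T + \frac{\ln(1/\gamma)}{\eta} + \frac{1}{\eta}\sum_{t=1}^T \mathbb{E}\Big[D_{F_{\mathcal{K}}^*}\big(\nabla F_{\mathcal{K}}(x_t) - \eta\tilde{c}_t, \nabla F_{\mathcal{K}}(x_t)\big)\Big].
\]

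The departure from the strongly convex argument is the single-term bound. Here I would invoke the general (H\"older) form of Lemma~\ref{lem:upper_bound_single_term}, which rests on Lemma~\ref{lem:bregman_upper_bound} with exponent $p$ and constant $L=2p(1+(q/(2\alpha))^{1/(q-1)})$, to obtain a bound of the shape $D_{F_{\mathcal{K}}^*}(\cdots) \leq \eta^p\big((1/2)^{2-p}+L\big)(1-\|x_t\|_{\mathcal{K}})\|\tilde{c}_t\|^p_{\mathcal{K}^\circ}$; dividing by $\eta$ produces the factor $\eta^{p-1}$, which collapses to $\eta$ when $p=2$ and the constant to $1+4(\alpha+1)/\alpha$, matching the strongly convex proof. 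It then remains to control $\mathbb{E}\big[(1-\|x_t\|_{\mathcal{K}})\|\tilde{c}_t\|^p_{\mathcal{K}^\circ}\big]$.

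For the moment computation, note that on the event $\xi_t=0,\,i_t=i$ the estimator collapses to $\tilde{c}_t = \tfrac{n c_{t,i}}{1-\|x_t\|_{\mathcal{K}}}e_i$ (the two occurrences of the sign $\epsilon_t$ cancel), while $\xi_t=1$ contributes zero. Since $\mathbb{P}(\xi_t=0)=1-\|x_t\|_{\mathcal{K}}$ and $\mathbb{P}(i_t=i)=1/n$, the prefactor $1-\|x_t\|_{\mathcal{K}}$ together with the Bernoulli weight partially cancel the $(1-\|x_t\|_{\mathcal{K}})^{-p}$ coming from $\|\tilde{c}_t\|^p_{\mathcal{K}^\circ}$, leaving $(1-\|x_t\|_{\mathcal{K}})^{2-p}\leq 1$ — this is the one place where $p\leq 2$ is essential. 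Bounding $\|e_i\|_{\mathcal{K}^\circ}\leq R$ via $\mathcal{K}\subset\ell_\infty(R)$ (since $\ell_1(1/R)\subset\mathcal{K}^\circ$) and using that $\ell_q(r)\subset\mathcal{K}$ dualizes to $\mathcal{K}^\circ\subset\ell_p(1/r)$, so that $\|c_t\|_p\leq 1/r$ for $c_t\in\mathcal{K}^\circ$, I would arrive at $\mathbb{E}\big[(1-\|x_t\|_{\mathcal{K}})\|\tilde{c}_t\|^p_{\mathcal{K}^\circ}\big] \leq n^{p-1}(R/r)^p$.

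Plugging this in gives $\Bar{R}_T(\mathcal{K}) \leq \gamma T + \ln(1/\gamma)/\eta + \eta^{p-1}\big((1/2)^{2-p}+L\big)n^{p-1}(R/r)^p\, T$, and substituting $\eta=1/(n^{1/q}T^{1/p})$ and $\gamma=1/\sqrt{T}$ collapses the three terms to the claimed bound: the first is $\sqrt{T}$, the second is $n^{1/q}T^{1/p}\ln(T)/2$, and for the third the conjugacy relation $1/p+1/q=1$ (equivalently $(p-1)/p=1/q$ and $1-1/q=1/p$) forces $\eta^{p-1}n^{p-1}T$ to simplify to $n^{1/q}T^{1/p}$. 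I expect the only delicate points to be the correct bookkeeping of the $\eta$-power ($\eta^{p-1}$ rather than $\eta$) in the single-term bound and the exponent algebra in this final collapse, together with the use of $p\leq 2$ to keep $(1-\|x_t\|_{\mathcal{K}})^{2-p}$ bounded; everything else transfers mechanically from the strongly convex case.
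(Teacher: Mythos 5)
Your proposal is correct and follows essentially the same route as the paper's own proof: the same reduction via \parencite[Theorem 5.5]{bubeck2012regret} on $(1-\gamma)\mathcal{K}$, the same H\"older single-term bound (Corollary~\ref{cor:upper_bound_single_term_UC}, resting on Lemma~\ref{lem:bregman_upper_bound}), the same moment computation yielding $n^{p-1}(R/r)^p$ with the $(1-\|x_t\|_{\mathcal{K}})^{2-p}\leq 1$ step, and the same exponent algebra under $1/p+1/q=1$. The only cosmetic differences are your explicit verification that $T\geq 2^p n(R/r)^p$ implies $\eta\leq r/(2Rn)$ and your upfront cancellation of the Rademacher signs in $\tilde{c}_t$, both of which the paper handles implicitly.
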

\begin{proof}
The proof is similar to Theorem \ref{th:linear_bandit_strongly_convex} and hence to \parencite[Theorem 5.8]{bubeck2012regret}. The difference is that we now leverage Corollary \ref{cor:upper_bound_single_term_UC}.
Note that with $T\geq 2^p n (R/r)^p$ and $\eta=n^{-1/q}T^{-1/p}$, we have $0\leq \eta \leq 1/(2n)(r/R)$.
As in the proof of Theorem \ref{th:linear_bandit_strongly_convex}, we have
\[
\Bar{R}_T(\mathcal{K}) \leq \gamma T +  \frac{\ln(1/\gamma)}{\eta} + \frac{1}{\eta} \sum_{t=1}^T\mathbb{E} \Big[D_{F_{\mathcal{K}}^*}\big(\nabla F_{\mathcal{K}}(x_t) - \eta \tilde{c}_t,\nabla F_{\mathcal{K}}(x_t)\big)\Big].
\]
Now applying Corollary \ref{cor:upper_bound_single_term_UC}, we have with $L=2p(1 + (q/(2\alpha))^{1/(q-1)})$
\[
D_{F_{\mathcal{K}}^*}(\nabla F_{\mathcal{K}}(x_t) - \eta \tilde{c}_t,\nabla F_{\mathcal{K}}(x_t)) \leq  (1-\|x_t\|_{\mathcal{K}}) \eta^p\|\tilde{c}_t\|^p_{\mathcal{K}^\circ}((1/2)^{2-p} + L).
\]
This hence implies
\[
\Bar{R}_T(\mathcal{K}) \leq \gamma T +  \frac{\ln(1/\gamma)}{\eta} + \eta^{p-1} ((1/2)^{2-p} + L)\sum_{t=1}^T\mathbb{E} \Big[(1-\|x\|_{\mathcal{K}})\|\tilde{c}_t\|^p_{\mathcal{K}^\circ}\Big].
\]
Let us now upper bound $\mathbb{E} \Big[(1-\|x\|_{\mathcal{K}})\|\tilde{c}_t\|^p_{\mathcal{K}^\circ}\Big]$.
Since $\mathcal{K}\subset\ell_{\infty}(R)$, we have $\ell_1(1/R)\subset\mathcal{K}^\circ$ and $e_i/R\in\mathcal{K}^\circ$ so that $\|r e_i\|_{\mathcal{K}^\circ}\leq rR$.
Hence, we have
\begin{eqnarray*}
\mathbb{E}\Big((1-\|x\|_{\mathcal{K}})\|\tilde{c}_t\|^p_{\mathcal{K^\circ}}\Big) &=& \mathbb{P}(\xi_t=0) \sum_{i=1}^{n}\frac{1}{n}(1-\|x_t\|_{\mathcal{K}})\Big(\frac{n}{r^2}\Big)^p \Big(\frac{|\langle r e_i; c_t \rangle|}{1-\|x_t\|_{\mathcal{K}}}\Big)^p \|re_i\|_{\mathcal{K}^\circ}^p\\
&\leq& (1-\|x_t\|_{\mathcal{K}})^{2-p} \sum_{i=1}^{n}n^{p-1} R^p c_{t,i}^p \leq n^{p-1} R^p \|c_t\|_p^p.
\end{eqnarray*}
Then since $\ell_q(r)\subset\mathcal{K}$, we have $\mathcal{K}^\circ\subset\ell_q(r)^\circ=\ell_p(1/r)$ so that $\|c_t\|_p\leq 1/r$ because $c_t\in\mathcal{K}^\circ$.
We ultimately obtain
\[
\Bar{R}_T(\mathcal{K}) \leq \gamma T +  \frac{\ln(1/\gamma)}{\eta} + \eta^{p-1} ((1/2)^{2-p} + L)T n^{p-1} \Big(\frac{R}{r}\Big)^p.
\]
Here, we choose $\eta$ of the form $T^{-\beta}n^{-\nu}$ with $\beta$ and $\nu$ such that the terms $1/\eta$ and $\eta^{p-1}Tn^{p-1}$ exhibit the same asymptotic rate in $n$ and $T$ respectively.
In particular, we choose $\eta=1/(n^{1/q}T^{1/p})$ and obtain (with $\gamma=1/\sqrt{T}$)
\[
\Bar{R}_T(\mathcal{K}) \leq \sqrt{T} + n^{1/q} T^{1/p} \ln(T)/2 +  ((1/2)^{2-p} + L) \Big(\frac{R}{r}\Big)^p n^{1/q} T^{1/p}.
\]
\end{proof}

Instantiating the regret bound in Theorem \ref{th:linear_bandit_UC} with $p=q=2$ results in the same regret bound as in Theorem \ref{th:linear_bandit_strongly_convex}. 
Indeed, the parameters in \eqref{eq:parameter_UC} with $q=2$ correspond to \eqref{eq:parameter_SC}.

\begin{remark}
Consider two compact convex sets $\mathcal{K}_1$ and $\mathcal{K}_2$. 
Their \emph{relative width} is defined as follows
\begin{equation}\label{eq:relative_width}\tag{Relative-Width}
w(\mathcal{K}_1,\mathcal{K}_2) \triangleq \underset{x\in\mathcal{K}_1, y\in\mathcal{K}_2}{\text{sup }} \langle x ; y \rangle.
\end{equation}
Note that $w(\mathcal{K}_1,\mathcal{K}_2)= \text{sup}_{x\in\mathcal{K}_1}\|x\|_{\mathcal{K}_2^\circ}$ and $\ell_q(r)^\circ = \ell_p(1/r)$, using the \eqref{eq:relative_width} we could replace the condition $\ell_q(r)\subset\mathcal{K}$ by $w(\mathcal{K}^\circ,\ell_q(1))\leq 1/r$.
\end{remark}

\subsection{Technical Lemmas}\label{ssec:technical_lemma}

We now detail the lemmas invoked in the proofs of Theorems \ref{th:linear_bandit_strongly_convex} and \ref{th:linear_bandit_UC}.
Lemma \ref{eq:identities_on_F} provides the expression for $\nabla F_{\mathcal{K}}$ and $\nabla F_{\mathcal{K}}^*$ and their differentiability domain.
Lemma \ref{lem:lower_bound_theta} is a technicality that notably explains why we constrain $\eta$ in $[0,r/(2nR)]$.
Lemma \ref{lem:upper_bound_single_term} (resp. Corollary \ref{cor:upper_bound_single_term_UC}) are instrumental in upper-bounding the terms $D_{F_{\mathcal{K}}^*}(\nabla F_{\mathcal{K}}(x_t) - \eta \tilde{c}_t,\nabla F_{\mathcal{K}}(x_t))$ when the set is strongly convex (resp. uniformly convex).
Technically, we build the link between the uniform convexity of the set and upper bounds on the regret in these lemmas.
Although uniform convexity is a weaker assumption than strong convexity, we distinguish the cases to stress the convergence results when the action sets are strongly convex.
All lemmas are self-contained and stated independently from Algorithm \ref{algo:bandit_mirror_descent}.

\begin{lemma}[Some Identities]\label{eq:identities_on_F}
Assume $\mathcal{K}\subset\mathbb{R}^n$ is strictly convex compact and smooth set.
Let $x\in\mathcal{K}$ s.t.~$\|x\|_{\mathcal{K}}<1$ and $d\in\mathbb{R}^n\setminus\{0\}$.
With $F_{\mathcal{K}}(x)=-\ln\big(1-\|x\|_{\mathcal{K}}\big) - \|x\|_{\mathcal{K}}$, $F_{\mathcal{K}}$ (resp. $F^*_{\mathcal{K}}$) is differentiable on $\text{Int}(\mathcal{K})$ (resp. $\mathbb{R}^n$) and 
we have
\begin{equation}\label{eq:identities_F_F_star}
  \left\{
    \begin{split}
    & \nabla F_{\mathcal{K}}(x) = \frac{\|x\|_{\mathcal{K}}}{1 - \|x\|_{\mathcal{K}}} \nabla \|\cdot\|_{\mathcal{K}}(x)\\
    & F_{\mathcal{K}}^*(d) = \|d\|_{\mathcal{K}^\circ} - \ln(1 + \|d\|_{\mathcal{K}^\circ})\\
    & \nabla F_{\mathcal{K}}^*(d) = \frac{\|d\|_{\mathcal{K}^\circ}}{1+\|d\|_{\mathcal{K}^\circ}} \nabla \|\cdot\|_{\mathcal{K}^\circ}(d).
    \end{split}
  \right.
\end{equation}
\end{lemma}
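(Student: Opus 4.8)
The plan is to reduce everything to the scalar profile $\phi(s)\triangleq -\ln(1-s)-s$ on $[0,1)$, together with the duality between the gauge $\|\cdot\|_{\mathcal{K}}$ and the support function $\|\cdot\|_{\mathcal{K}^\circ}=\sigma_{\mathcal{K}}$. First I would settle the differentiability of the two gauges. Since $\mathcal{K}$ is smooth, Lemma \ref{lem:duality_smoothness_convexity} applied to $\mathcal{K}^\circ$ (recalling $(\mathcal{K}^\circ)^\circ=\mathcal{K}$) shows $\mathcal{K}^\circ$ is strictly convex, so Lemma \ref{eq:differentiability_gauge}\ref{item:set_smooth_differentiable} gives that $\|\cdot\|_{\mathcal{K}}$ is differentiable on $\mathbb{R}^n\setminus\{0\}$; symmetrically, strict convexity of $\mathcal{K}$ itself gives differentiability of $\|\cdot\|_{\mathcal{K}^\circ}$ on $\mathbb{R}^n\setminus\{0\}$. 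Writing $F_{\mathcal{K}}(x)=\phi(\|x\|_{\mathcal{K}})$ and applying the chain rule, $\nabla F_{\mathcal{K}}(x)=\phi'(\|x\|_{\mathcal{K}})\,\nabla\|\cdot\|_{\mathcal{K}}(x)$ with $\phi'(s)=\frac{1}{1-s}-1=\frac{s}{1-s}$, which is exactly the first identity.

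For the conjugate, the key observation is that $F_{\mathcal{K}}$ is radial, so the supremum defining $F^*_{\mathcal{K}}$ decouples into an angular and a radial part. Using the H\"older-type inequality $\langle x;d\rangle\leq \|x\|_{\mathcal{K}}\,\|d\|_{\mathcal{K}^\circ}$ (tight because $\mathcal{K}$ is compact, so the supremum defining $\sigma_{\mathcal{K}}(d)=\|d\|_{\mathcal{K}^\circ}$ is attained), I would write
\[
F^*_{\mathcal{K}}(d)=\sup_{0\le s<1}\big(s\,\|d\|_{\mathcal{K}^\circ}-\phi(s)\big).
\]
This is a one-dimensional concave maximization; setting the derivative $\|d\|_{\mathcal{K}^\circ}-\frac{s}{1-s}=0$, equivalently $\|d\|_{\mathcal{K}^\circ}+1=\frac{1}{1-s}$, yields the maximizer $s^\star=\frac{\|d\|_{\mathcal{K}^\circ}}{1+\|d\|_{\mathcal{K}^\circ}}$, which lies in $[0,1)$ for every $d$, so no boundary case arises. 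Substituting $s^\star$ back gives $F^*_{\mathcal{K}}(d)=\|d\|_{\mathcal{K}^\circ}-\ln(1+\|d\|_{\mathcal{K}^\circ})$, the second identity. The third identity follows by the same chain-rule computation as for $F_{\mathcal{K}}$: with $\psi(\rho)\triangleq \rho-\ln(1+\rho)$ one has $\psi'(\rho)=\frac{\rho}{1+\rho}$, whence $\nabla F^*_{\mathcal{K}}(d)=\psi'(\|d\|_{\mathcal{K}^\circ})\,\nabla\|\cdot\|_{\mathcal{K}^\circ}(d)$.

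The one subtle point, and the place I expect to have to be careful, is differentiability at the origin, where neither gauge is differentiable. Here I would exploit that both profiles satisfy $\phi(s)=\tfrac12 s^2+o(s^2)$ and $\psi(\rho)=\tfrac12\rho^2+o(\rho^2)$ as the argument tends to $0$, so that near the origin $F_{\mathcal{K}}(x)=\tfrac12\|x\|_{\mathcal{K}}^2+o(\|x\|_{\mathcal{K}}^2)$ and $F^*_{\mathcal{K}}(d)=\tfrac12\|d\|_{\mathcal{K}^\circ}^2+o(\|d\|_{\mathcal{K}^\circ}^2)$. Since the square of a gauge (equivalent to a norm via $\ell_1(r)\subset\mathcal{K}\subset\ell_\infty(R)$) is differentiable at $0$ with vanishing gradient, both $F_{\mathcal{K}}$ and $F^*_{\mathcal{K}}$ are differentiable at the origin with gradient $0$; this is consistent with the closed forms, whose prefactors $\frac{s}{1-s}$ and $\frac{\rho}{1+\rho}$ vanish there while the gauge gradients remain bounded (indeed norm-one, by Lemma \ref{eq:differentiability_gauge}\ref{item:unit_gradient}). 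Combining the interior differentiability from the chain rule with this origin argument establishes that $F_{\mathcal{K}}$ is differentiable on $\text{Int}(\mathcal{K})$ and $F^*_{\mathcal{K}}$ on all of $\mathbb{R}^n$, completing the proof.
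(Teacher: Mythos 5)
Your proposal is correct and follows essentially the same route as the paper's proof: reduce $F_{\mathcal{K}}$ to the scalar profile composed with the gauge, compute the conjugate radially (the paper cites \parencite[1.47]{schneider2014convex} for the decoupling you verify by hand), obtain the gradients by the chain rule, and get differentiability of both gauges away from the origin from Lemma \ref{lem:duality_smoothness_convexity} together with Lemma \ref{eq:differentiability_gauge}\ref{item:set_smooth_differentiable}. Your treatment of the origin via the quadratic expansion of the scalar profiles is a slightly more explicit version of the paper's argument that the gradients vanish in the limit, and is equally valid.
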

\begin{proof}
Let us first compute $F_{\mathcal{K}}^*$.
We have $F_{\mathcal{K}}(d)=g\circ \|x\|_{\mathcal{K}}$ with $g(r)= -\ln(1-r) - r$ for $r\in[0,1[$.
Note that $g(0)=0$ and $g$ is convex.
Write $g^*(y)\triangleq \text{sup}_{r\in [0,1]} yr +\ln(1-r) +r$ for $y\geq 0$.
With simple analysis, we have $g^*(y)= y -\ln(1+y)$.
Then, with, \textit{e.g.}, \parencite[1.47]{schneider2014convex}, we have that $F^*_{\mathcal{K}}(d) = g^*\circ \|d\|_{\mathcal{K}^\circ} = \|d\|_{\mathcal{K}^\circ} -\ln(1 + \|d\|_{\mathcal{K}^\circ})$.

The gradient identities \eqref{eq:identities_F_F_star} are then immediate at points $(x,d)$ s.t.~$\|\cdot\|_{\mathcal{K}}$ and $\|\cdot\|_{\mathcal{K}^\circ}$ are differentiable.
From Lemma \ref{lem:duality_smoothness_convexity} since $\mathcal{K}$ is smooth, $\mathcal{K}^\circ$ is strictly convex.
For $(x,d)\in\mathcal{K}\setminus\{0\}\times \mathbb{R}^n\setminus\{0\}$, by Lemma \ref{eq:differentiability_gauge} \ref{item:set_smooth_differentiable}, we have that $\|\cdot\|_{\mathcal{K}}$ and $\|\cdot\|_{\mathcal{K}^\circ}$ are differentiable.
$F_{\mathcal{K}}$ and $F_{\mathcal{K}^\circ}$ are then also differentiable at $\{0\}$ because $\|\nabla F_{\mathcal{K}}(x)\|$ and $\|\nabla F_{\mathcal{K}^\circ}(d)\|$ converges to zero as $x$ and $d$ converge to zero (since $\nabla\|\cdot\|_{\mathcal{K}}(x)$ is of norm one).
\end{proof}

\begin{lemma}[Lower Bound on $\Theta$]\label{lem:lower_bound_theta}
Assume $\ell_1(r)\subset\mathcal{K}\subset\ell_{\infty}(R)$ for some $r,R>0$.
Let $x\in\mathcal{K}$ with $\|x\|_{\mathcal{K}}<1$, $\eta>0$ and $c\in\mathcal{K}^\circ$.
Consider the realizations of random variable $\xi\sim \text{Ber}(\|x\|_{\mathcal{K}})$, $i\sim \frac{1}{n}\mathbbm{1}_n$, and $\epsilon\sim\text{Rad}(\frac{1}{2})$. 
We define $a\in\mathcal{K}$ (resp. $\tilde{c}$) similarly to $a_t$ (resp. $\tilde{c}_t$) in Algorithm \ref{algo:bandit_mirror_descent} with
    \begin{equation}\label{eq:def_a_c_independent}
     a = \left\{
        \begin{split}
        & x/\|x\|_{\mathcal{K}} \text{ if } \xi=1\\
        & r \epsilon~e_{i} \text{ otherwise,}
        \end{split}
      \right.
    ~~\text{ and } ~~\tilde{c}= \frac{n}{r^2} (1-\xi) \frac{\langle a; c\rangle}{1 - \|x\|_{\mathcal{K}}}a.
    \end{equation}
Write $u=\nabla F_{\mathcal{K}}(x) - \eta \tilde{c}$ and  $v=\nabla F_{\mathcal{K}}(x)$.
Then, we have
\begin{equation}\label{eq:lower_bound_theta}
\frac{\|u\|_{\mathcal{K}^\circ} -\|v\|_{\mathcal{K}^\circ}}{1+\|v\|_{\mathcal{K}^\circ}}\geq -\eta n \frac{R}{r}.
\end{equation}
\end{lemma}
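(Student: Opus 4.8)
The plan is to reduce the claimed inequality to a bound on $\eta\|\tilde{c}\|_{\mathcal{K}^\circ}$ via the reverse triangle inequality, and then to exploit a clean cancellation between the size of $\tilde{c}$ and the denominator $1+\|v\|_{\mathcal{K}^\circ}$. Since $\mathcal{K}^\circ$ is convex and contains the origin in its interior (because $\ell_1(r)\subset\mathcal{K}\subset\ell_\infty(R)$), its gauge $\|\cdot\|_{\mathcal{K}^\circ}$ is subadditive and positively homogeneous; with $v-u=\eta\tilde{c}$ this gives $\|v\|_{\mathcal{K}^\circ}\leq \|u\|_{\mathcal{K}^\circ}+\|v-u\|_{\mathcal{K}^\circ}$, i.e.\ $\|u\|_{\mathcal{K}^\circ}-\|v\|_{\mathcal{K}^\circ}\geq -\eta\|\tilde{c}\|_{\mathcal{K}^\circ}$. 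Dividing by the positive quantity $1+\|v\|_{\mathcal{K}^\circ}$, it then suffices to prove that $\eta\|\tilde{c}\|_{\mathcal{K}^\circ}/(1+\|v\|_{\mathcal{K}^\circ})\leq \eta n R/r$.

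I would next evaluate the denominator exactly. By the gradient identity for $\nabla F_{\mathcal{K}}$ in Lemma \ref{eq:identities_on_F}, $v=\nabla F_{\mathcal{K}}(x)=\frac{\|x\|_{\mathcal{K}}}{1-\|x\|_{\mathcal{K}}}\nabla\|\cdot\|_{\mathcal{K}}(x)$, and Lemma \ref{eq:differentiability_gauge}\ref{item:unit_gradient} gives $\|\nabla\|\cdot\|_{\mathcal{K}}(x)\|_{\mathcal{K}^\circ}=1$. Hence $\|v\|_{\mathcal{K}^\circ}=\|x\|_{\mathcal{K}}/(1-\|x\|_{\mathcal{K}})$ and so $1+\|v\|_{\mathcal{K}^\circ}=1/(1-\|x\|_{\mathcal{K}})$. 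For the numerator, the case $\xi=1$ is immediate since then $\tilde{c}=0$. When $\xi=0$ we have $a=r\epsilon e_i$, and using $\epsilon^2=1$ the definition of $\tilde{c}$ collapses to $\tilde{c}=\frac{n c_i}{1-\|x\|_{\mathcal{K}}}e_i$, so $\|\tilde{c}\|_{\mathcal{K}^\circ}=\frac{n|c_i|}{1-\|x\|_{\mathcal{K}}}\|e_i\|_{\mathcal{K}^\circ}$. The inclusion $\mathcal{K}\subset\ell_\infty(R)$ yields $e_i/R\in\ell_1(1/R)\subset\mathcal{K}^\circ$, hence $\|e_i\|_{\mathcal{K}^\circ}\leq R$; and $\ell_1(r)\subset\mathcal{K}$ yields $\mathcal{K}^\circ\subset\ell_\infty(1/r)$, so $c\in\mathcal{K}^\circ$ forces $|c_i|\leq 1/r$. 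Combining, $\|\tilde{c}\|_{\mathcal{K}^\circ}\leq (nR/r)/(1-\|x\|_{\mathcal{K}})$.

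Finally, dividing this numerator bound by $1+\|v\|_{\mathcal{K}^\circ}=1/(1-\|x\|_{\mathcal{K}})$ cancels the factor $1-\|x\|_{\mathcal{K}}$ and leaves exactly $nR/r$, which delivers the claim. The proof is short, and the only genuinely important point is this cancellation: the $1/(1-\|x\|_{\mathcal{K}})$ blow-up of the importance-weighted estimator $\tilde{c}$ near $\partial\mathcal{K}$ is matched precisely by the corresponding growth of $1+\|v\|_{\mathcal{K}^\circ}$, so the ratio stays bounded by $nR/r$ uniformly in $\|x\|_{\mathcal{K}}<1$. I expect no real obstacle beyond correctly invoking the gradient identities and keeping track of the two set inclusions that control $\|e_i\|_{\mathcal{K}^\circ}$ and $|c_i|$.
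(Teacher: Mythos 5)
Your proof is correct and follows essentially the same route as the paper: reverse triangle inequality for the gauge $\|\cdot\|_{\mathcal{K}^\circ}$, the identity $1+\|v\|_{\mathcal{K}^\circ}=1/(1-\|x\|_{\mathcal{K}})$ from Lemmas \ref{eq:identities_on_F} and \ref{eq:differentiability_gauge}, and the two inclusions $\ell_1(1/R)\subset\mathcal{K}^\circ\subset\ell_\infty(1/r)$ to cancel the $1/(1-\|x\|_{\mathcal{K}})$ factor in the estimator. The only cosmetic difference is that you bound $|c_i|\leq 1/r$ directly via the polar inclusion, whereas the paper bounds $|\langle a;c\rangle|\leq\|a\|_{\mathcal{K}}\|c\|_{\mathcal{K}^\circ}\leq 1$ by gauge duality; these are the same estimate.
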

\begin{proof}
Note that because $\ell_{1}(r)\subset\mathcal{K}$, we have $\pm r e_i\in\mathcal{K}$ and in particular $a\in\mathcal{K}$.
We now follow the argument of \parencite{bubeck2012regret}.
With the expression of $\nabla F_{\mathcal{K}}(x)$ in Lemma \ref{eq:identities_on_F} and that $\big\|\nabla\|\cdot\|_{\mathcal{K}}(x)\big\|_{\mathcal{K}^\circ} = 1$ in Lemma \ref{eq:differentiability_gauge}, we have $\frac{1}{1+\|\nabla F_{\mathcal{K}}(x)\|_{\mathcal{K}^\circ}} = 1 - \|x\|_{\mathcal{K}}$.
So with the triangle inequality, we have  $\|v - \eta \tilde{c}\|_{\mathcal{K}^\circ}\geq \|v\|_{\mathcal{K}^\circ} - \eta \|\tilde{c}\|_{\mathcal{K}^\circ}$ so that we obtain
\[
\frac{\|v - \eta \tilde{c}\|_{\mathcal{K}^\circ} -\|v\|_{\mathcal{K}^\circ}}{1+\|v\|_{\mathcal{K}^\circ}}\geq -\eta\|\tilde{c}\|_{\mathcal{K}^\circ}(1 - \|x\|_{\mathcal{K}}).
\]
Then, since $\mathlarger{\tilde{c}=\frac{n}{r^2}(1-\xi)\frac{\langle a; c\rangle}{1- \|x\|_{\mathcal{K}}}a}$, we have
\begin{equation}\label{eq:intermediary}
\frac{\|v - \eta \tilde{c}\|_{\mathcal{K}^\circ} -\|v\|_{\mathcal{K}^\circ}}{1+\|v\|_{\mathcal{K}^\circ}}\geq -\eta \frac{n}{r^2} (1-\xi) |\langle a;c\rangle| \cdot \|a\|_{\mathcal{K}^\circ}.
\end{equation}
Because $\|\cdot\|_{\mathcal{K}}$ and $\|\cdot\|_{\mathcal{K}^\circ}$ are dual norms and $(a,c)\in\mathcal{K}\times\mathcal{K}^\circ$ we have
$|\langle a; c\rangle| \leq \|a\|_{\mathcal{K}} \|c\|_{\mathcal{K}^\circ} \leq 1$, which leads to
\[
\frac{\|v - \eta \tilde{c}\|_{\mathcal{K}^\circ} -\|v\|_{\mathcal{K}^\circ}}{1+\|v\|_{\mathcal{K}^\circ}}\geq -\eta \frac{n}{r^2}\|a\|_{\mathcal{K}^\circ}.
\]
When $\xi=1$, \eqref{eq:lower_bound_theta} is already satisfied. 
Otherwise, $\xi=0$ and by definition of $a$, we have $a=\epsilon r e_i$ with $i\in[n]$ and $\epsilon\in\{-1, 1\}$.
Since $\mathcal{K}\subset\ell_{\infty}(R)$, we have $\ell_{\infty}(R)^\circ =\ell_1(1/R) \subset \mathcal{K}^\circ$ and $e_i/R\in\mathcal{K}^\circ$.
Hence, $\|r e_i\|_{\mathcal{K}^\circ} = rR \| e_i/R\|_{\mathcal{K}^\circ}\leq rR$.
So finally, we obtain
\[
\frac{\|v - \eta \tilde{c}\|_{\mathcal{K}^\circ} -\|v\|_{\mathcal{K}^\circ}}{1+\|v\|_{\mathcal{K}^\circ}}\geq -\eta n \frac{R}{r}.
\]
\end{proof}

The following lemma is instrumental to obtaining the pseudo-regret bounds.
Note that the distance of $x_t$ to $\mathcal{K}$ is controlled by $\gamma$, see Line \ref{line:mirror_descent_step} in Algorithm \ref{algo:bandit_mirror_descent}.
Finally, the sole difference with the bound obtained with the Euclidean ball is with the extra factor $1 + 4(\alpha + 1)/\alpha$ and the constraint in $\eta$ that now depends on the ratio $r/R$ which, \textit{e.g.}, equals $1$ for any $\ell_q(1)$ ball.

\begin{lemma}[One Term Upper Bound Strong Convexity]\label{lem:upper_bound_single_term}
Consider $\mathcal{K}$ a $\alpha$-strongly convex and centrally symmetric set with non-empty interior. 
Assume that $\ell_1(r)\subset\mathcal{K}\subset\ell_{\infty}(R)$ for some $r,R>0$.
Let $x\in\mathcal{K}$ s.t.~$\|x\|_{\mathcal{K}}<1$ and $\tilde{c}$ as defined in \eqref{eq:def_a_c_independent}.
If $0<\eta\leq \frac{1}{2n} \frac{r}{R}$, then we have
\begin{equation}\label{eq:major_upper_bound_strongly_convex}
D_{F_{\mathcal{K}}^*}(\nabla F_{\mathcal{K}}(x) - \eta \tilde{c},\nabla F_{\mathcal{K}}(x)) \leq  (1-\|x\|_{\mathcal{K}}) \Big(1 + \frac{4(\alpha + 1)}{\alpha}\Big)\eta^2\|\tilde{c}\|^2_{\mathcal{K}^\circ}.
\end{equation}
\end{lemma}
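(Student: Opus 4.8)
The plan is to exploit the explicit form $F^*_{\mathcal{K}}(d) = \|d\|_{\mathcal{K}^\circ} - \ln(1+\|d\|_{\mathcal{K}^\circ})$ from Lemma \ref{eq:identities_on_F} and to notice that $F^*_{\mathcal{K}}$ factors through the \emph{squared} dual gauge: writing $\psi \triangleq \frac{1}{2}\|\cdot\|^2_{\mathcal{K}^\circ}$ and $G(t) \triangleq \sqrt{2t} - \ln(1+\sqrt{2t})$, one has $F^*_{\mathcal{K}} = G\circ\psi$. The point of this factorization is that the strong convexity of $\mathcal{K}$ enters \emph{only} through $\psi$, for which Corollary \ref{cor:bregman_upper_bound_strong_convexity} already supplies a quadratic Bregman bound. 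Applying the chain rule for the Bregman divergence of a scalar-times-vector composition, with $u = \nabla F_{\mathcal{K}}(x) - \eta \tilde{c}$ and $v = \nabla F_{\mathcal{K}}(x)$, I would first establish
\[
D_{F^*_{\mathcal{K}}}(u,v) = D_G\big(\psi(u),\psi(v)\big) + G'(\psi(v))\, D_\psi(u,v),
\]
which splits the target into a one-dimensional (``radial'') term and a genuinely multivariate (``curvature'') term.

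For the curvature term I would compute $G'(\psi(v)) = 1/(1+\|v\|_{\mathcal{K}^\circ})$ and combine the gradient formula for $\nabla F_{\mathcal{K}}$ with the fact that $\nabla\|\cdot\|_{\mathcal{K}}(x)$ has unit dual-gauge norm (Lemma \ref{eq:differentiability_gauge}\ref{item:unit_gradient}) to obtain the clean identity $G'(\psi(v)) = 1-\|x\|_{\mathcal{K}}$. Since $u - v = -\eta \tilde{c}$, Corollary \ref{cor:bregman_upper_bound_strong_convexity} gives $D_\psi(u,v) \le 4\frac{\alpha+1}{\alpha}\eta^2\|\tilde{c}\|^2_{\mathcal{K}^\circ}$, so the curvature term is bounded by $(1-\|x\|_{\mathcal{K}})\frac{4(\alpha+1)}{\alpha}\eta^2\|\tilde{c}\|^2_{\mathcal{K}^\circ}$, which already produces the $\frac{4(\alpha+1)}{\alpha}$ factor in the statement.

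The remaining radial term $D_G(\psi(u),\psi(v))$ is where the real work lies, and it is the step I expect to be the main obstacle. Setting $s = \|v\|_{\mathcal{K}^\circ}$, $\delta = \|u\|_{\mathcal{K}^\circ}-s$ and $\beta = \delta/(1+s)$, a direct simplification (using $\psi(u)-\psi(v) = s\delta + \frac{\delta^2}{2}$ and $\ln\frac{1+s+\delta}{1+s} = \ln(1+\beta)$) collapses the entire scalar Bregman divergence to
\[
D_G\big(\psi(u),\psi(v)\big) = \beta - \ln(1+\beta) - \frac{\delta^2}{2(1+s)} \le \beta - \ln(1+\beta).
\]
The delicate point is controlling $\beta$: the triangle inequality gives $|\delta| \le \eta\|\tilde{c}\|_{\mathcal{K}^\circ}$, and bounding $\|\tilde{c}\|_{\mathcal{K}^\circ} \le \frac{nR}{r}(1+s)$ (via $|\langle a;c\rangle|\le 1$ and $\|r e_i\|_{\mathcal{K}^\circ}\le rR$, exactly as in Lemma \ref{lem:lower_bound_theta}) together with the hypothesis $\eta \le \frac{r}{2nR}$ forces $|\beta| \le \frac{1}{2}$. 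On $[-\frac{1}{2},\frac{1}{2}]$ one has the elementary inequality $\beta - \ln(1+\beta) \le \beta^2$, and since $\beta^2 = \frac{\delta^2}{(1+s)^2} \le \frac{\eta^2\|\tilde{c}\|^2_{\mathcal{K}^\circ}}{1+s} = (1-\|x\|_{\mathcal{K}})\eta^2\|\tilde{c}\|^2_{\mathcal{K}^\circ}$, the radial term contributes the leading $1$ in the constant. Adding the two bounds yields exactly $(1-\|x\|_{\mathcal{K}})\big(1+\frac{4(\alpha+1)}{\alpha}\big)\eta^2\|\tilde{c}\|^2_{\mathcal{K}^\circ}$. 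The one technical caveat to verify in passing is the differentiability of $\psi$ at $v$ (so that the chain-rule split is valid), which follows from the strict convexity of $\mathcal{K}$ implied by strong convexity, via Lemma \ref{eq:differentiability_gauge}\ref{item:set_smooth_differentiable}.
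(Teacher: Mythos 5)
Your proof is correct and is essentially the paper's argument in different packaging: your chain-rule identity $D_{G\circ\psi}(u,v)=D_G(\psi(u),\psi(v))+G'(\psi(v))D_\psi(u,v)$ with $G'(\psi(v))=1-\|x\|_{\mathcal{K}}$ reproduces exactly the paper's central decomposition $D_{F_{\mathcal{K}}^*}(u,v)=\Theta-\ln(1+\Theta)-\tfrac{1}{2}\tfrac{(\|u\|_{\mathcal{K}^\circ}-\|v\|_{\mathcal{K}^\circ})^2}{1+\|v\|_{\mathcal{K}^\circ}}+\tfrac{1}{1+\|v\|_{\mathcal{K}^\circ}}D_{\frac{1}{2}\|\cdot\|^2_{\mathcal{K}^\circ}}(u,v)$ (your $\beta$ is the paper's $\Theta$), which the paper obtains by adding and subtracting $\tfrac{1}{2}\|u\|^2_{\mathcal{K}^\circ}$. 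From there both proofs coincide: the bound $|\beta|\le\tfrac{1}{2}$ from $\eta\le\tfrac{r}{2nR}$ (the paper's Lemma \ref{lem:lower_bound_theta}, which you re-derive inline), the elementary inequality $\beta-\ln(1+\beta)\le\beta^2$, and Corollary \ref{cor:bregman_upper_bound_strong_convexity} for the curvature term.
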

\begin{proof}[Proof of Lemma \ref{lem:upper_bound_single_term}]
Let us write $u=\nabla F_{\mathcal{K}}(x) - \eta \tilde{c}$, $v=\nabla F_{\mathcal{K}}(x)$, and $\Theta = \frac{\|u\|_{\mathcal{K}^\circ} -\|v\|_{\mathcal{K}^\circ}}{1+\|v\|_{\mathcal{K}^\circ}}$.
Elementary manipulations combined with Lemma \ref{eq:identities_on_F} give
\begin{eqnarray*}
D_{F_{\mathcal{K}}^*}(u,v) &=& F_{\mathcal{K}}^*(u) - F_{\mathcal{K}}^*(v) - \langle \nabla F_{\mathcal{K}}^*(v);u-v\rangle\\
&=& \|u\|_{\mathcal{K}^\circ} -\|v\|_{\mathcal{K}^\circ} - \ln\Big(\frac{1 + \|u\|_{\mathcal{K}^\circ}}{1 + \|v\|_{\mathcal{K}^\circ}}\Big) - \frac{\|v\|_{\mathcal{K}^\circ}}{1+\|v\|_{\mathcal{K}^\circ}} \big\langle \nabla \|\cdot\|_{\mathcal{K}^\circ}(v); u- v\big\rangle\\
&=& \|u\|_{\mathcal{K}^\circ} -\|v\|_{\mathcal{K}^\circ} - \ln\big(1 +\Theta\big) - \frac{\|v\|_{\mathcal{K}^\circ}}{1+\|v\|_{\mathcal{K}^\circ}} \big\langle \nabla \|\cdot\|_{\mathcal{K}^\circ}(v); u- v\big\rangle\\
&=& \frac{1}{1+\|v\|_{\mathcal{K}^\circ}}\Big[ (1+\|v\|_{\mathcal{K}^\circ})(\|u\|_{\mathcal{K}^\circ} -\|v\|_{\mathcal{K}^\circ}) - (1+\|v\|_{\mathcal{K}^\circ})\ln\big(1 +\Theta\big) \\
& & \qquad \qquad \qquad \qquad \qquad \qquad \qquad \qquad \qquad- \|v\|_{\mathcal{K}^\circ} \big\langle \nabla \|\cdot\|_{\mathcal{K}^\circ}(v); u- v\big\rangle\Big]\\
&=& \Theta - \ln\big(1 +\Theta \big) + \frac{1}{1+\|v\|_{\mathcal{K}^\circ}}\underbrace{\Big[ \|v\|_{\mathcal{K}^\circ}(\|u\|_{\mathcal{K}^\circ} -\|v\|_{\mathcal{K}^\circ}) - \|v\|_{\mathcal{K}^\circ} \big\langle \nabla \|\cdot\|_{\mathcal{K}^\circ}(v); u- v\big\rangle\Big]}_{H \triangleq}.
\end{eqnarray*}
Let us add and subtract $-\frac{1}{2}\|u\|^2_{\mathcal{K}^\circ}$ in $H$. We obtain
\[
H = \|v\|_{\mathcal{K}^\circ}\|u\|_{\mathcal{K}^\circ} - \frac{1}{2}\|v\|^2_{\mathcal{K}^\circ} - \frac{1}{2}\|u\|^2_{\mathcal{K}^\circ} + \frac{1}{2} \|u\|^2_{\mathcal{K}^\circ} -\frac{1}{2}\|v\|^2_{\mathcal{K}^\circ} - \big\langle \|v\|_{\mathcal{K}^\circ}\nabla \|\cdot\|_{\mathcal{K}^\circ}(v); u- v\big\rangle.
\]
We note that $\nabla \frac{1}{2}\|\cdot\|^2_{\mathcal{K}^\circ}(v) = \|v\|_{\mathcal{K}^\circ} \nabla \|\cdot\|_{\mathcal{K}^\circ}(v)$. It is then crucial to observe that the Bregman divergence of $\frac{1}{2}\|\cdot\|_{\mathcal{K}^\circ}$ appears as follows
\begin{eqnarray*}
H & = & \|v\|_{\mathcal{K}^\circ}\|u\|_{\mathcal{K}^\circ} - \frac{1}{2}\|v\|^2_{\mathcal{K}^\circ} - \frac{1}{2}\|u\|^2_{\mathcal{K}^\circ} + D_{\frac{1}{2}\|\cdot\|^2_{\mathcal{K}^\circ}}(u,v)\\
& = & -\frac{1}{2} \big(\|u\|_{\mathcal{K}^\circ} - \|v\|_{\mathcal{K}^\circ}\big)^2 + D_{\frac{1}{2}\|\cdot\|^2_{\mathcal{K}^\circ}}(u,v).
\end{eqnarray*}
Overall, with careful rewriting, we obtain that for any $(u,v)\in\mathbb{R}^n$
\[
D_{F_{\mathcal{K}}^*}(u,v) = \Theta - \ln\big(1 + \Theta \big)  -\frac{1}{2} \frac{\big(\|u\|_{\mathcal{K}^\circ} - \|v\|_{\mathcal{K}^\circ}\big)^2}{1+\|v\|_{\mathcal{K}^\circ}} + \frac{1}{1+\|v\|_{\mathcal{K}^\circ}} D_{\frac{1}{2}\|\cdot\|^2_{\mathcal{K}^\circ}}(u,v).
\]
With $\frac{1}{1+\|v\|_{\mathcal{K}^\circ}} = \frac{1}{1+\|\nabla F_{\mathcal{K}}(x)\|_{\mathcal{K}^\circ}} = 1 - \|x\|_{\mathcal{K}}$ (Lemma \ref{eq:identities_on_F} and $\nabla \|\cdot\|_{\mathcal{K}}(x)$ is norm $1$) it follows
\[
D_{F_{\mathcal{K}}^*}(u,v) \leq \Theta - \ln\big(1 + \Theta \big) + (1 - \|x\|_{\mathcal{K}}) D_{\frac{1}{2}\|\cdot\|^2_{\mathcal{K}^\circ}}(u,v).
\]
Then, to upper bound $\Theta - \ln\big(1 + \Theta \big)$, we note that $\ln(1+\theta)\geq \theta - \theta^2$ for all $\theta\geq -\frac{1}{2}$. 
Hence, we need to choose $\eta$ such that $\Theta\geq -\frac{1}{2}$.
If $-\eta n \frac{R}{r}\geq -\frac{1}{2}$, \textit{i.e.}, for $\eta\leq \frac{1}{2n} \frac{r}{R}$, Lemma \ref{lem:lower_bound_theta} implies that $\Theta\geq -\frac{1}{2}$. 
Thus,
\[
D_{F_{\mathcal{K}}^*}(u,v) \leq \Big(\frac{\|u\|_{\mathcal{K}^\circ} -\|v\|_{\mathcal{K}^\circ}}{1+\|v\|_{\mathcal{K}^\circ}}\Big)^2 + (1-\|x\|_{\mathcal{K}}) D_{\frac{1}{2}\|\cdot\|^2_{\mathcal{K}^\circ}}(u,v).
\]
Then, by the triangle inequality, and $1/(1+\|v\|_{\mathcal{K}^\circ})= 1 - \|x\|_{\mathcal{K}}$, we have
\begin{equation}\label{eq:central_intermediary_upper_bound}
D_{F_{\mathcal{K}}^*}(u,v) \leq (1-\|x\|_{\mathcal{K}})^2\|u-v\|^2_{\mathcal{K}^\circ} + (1-\|x\|_{\mathcal{K}}) D_{\frac{1}{2}\|\cdot\|^2_{\mathcal{K}^\circ}}(u,v).
\end{equation}
Then, with Corollary \ref{cor:bregman_upper_bound_strong_convexity}, we have
$D_{\frac{1}{2}\|\cdot\|^2_{\mathcal{K}^\circ}}(u,v)\leq \frac{4(\alpha +1)}{\alpha} \|u-v\|^2_{\mathcal{K}^\circ}$.
Hence by combining it with \eqref{eq:central_intermediary_upper_bound}, we obtain
\[
D_{F_{\mathcal{K}}^*}(u,v) \leq (1-\|x\|_{\mathcal{K}})\|u-v\|^2_{\mathcal{K}^\circ} \big[ 1 + \frac{4(\alpha + 1)}{\alpha}\big].
\]
\end{proof}

With the very same technique, we obtain another form of upper bound when the set is uniformly convex. 
For the sake of clarity we write it as a corollary of Lemma \ref{lem:upper_bound_single_term} although it is an extension.

\begin{corollary}[One Term Upper Bound Uniform Convexity]\label{cor:upper_bound_single_term_UC}
Let $q\geq 2$ and $p\in ]1,2]$ s.t.~$1/p + 1/q =1$.
Consider $\mathcal{K}$ an $(\alpha,q)$-uniformly convex and centrally symmetric with non-empty interior set.
Assume that $\ell_1(r)\subset\mathcal{K}\subset\ell_\infty(R)$ for some $r,R>0$.
Let $x\in\mathcal{K}$ s.t.~$\|x\|_{\mathcal{K}}<1$ and $\tilde{c}$ as defined in \eqref{eq:def_a_c_independent}.
If $0<\eta\leq \frac{1}{2n}\frac{r}{R}$, then we have
\begin{equation}
D_{F_{\mathcal{K}}^*}(\nabla F_{\mathcal{K}}(x) - \eta \tilde{c},\nabla F_{\mathcal{K}}(x)) \leq  (1-\|x\|_{\mathcal{K}}) \eta^p\|\tilde{c}\|^p_{\mathcal{K}^\circ}((1/2)^{2-p} + L),
\end{equation}
with $L\triangleq 2p(1 + (q/(2\alpha))^{1/(q-1)})$.
\end{corollary}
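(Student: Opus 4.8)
The plan is to mirror the proof of Lemma \ref{lem:upper_bound_single_term} essentially verbatim up to the intermediate estimate \eqref{eq:central_intermediary_upper_bound}, and only at the very end to swap the strongly convex Corollary \ref{cor:bregman_upper_bound_strong_convexity} for the uniformly convex Lemma \ref{lem:bregman_upper_bound}. The first thing I would observe is that the entire derivation of \eqref{eq:central_intermediary_upper_bound} never invokes strong convexity. Writing $u=\nabla F_{\mathcal{K}}(x)-\eta\tilde{c}$, $v=\nabla F_{\mathcal{K}}(x)$, and $\Theta=(\|u\|_{\mathcal{K}^\circ}-\|v\|_{\mathcal{K}^\circ})/(1+\|v\|_{\mathcal{K}^\circ})$, the exact expansion of $D_{F_{\mathcal{K}}^*}(u,v)$ uses only the identities of Lemma \ref{eq:identities_on_F}, and the estimate $\Theta-\ln(1+\Theta)\leq\Theta^2$ uses only $\ln(1+\theta)\geq\theta-\theta^2$ for $\theta\geq-1/2$. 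The hypothesis $0<\eta\leq\frac{1}{2n}\frac{r}{R}$ together with Lemma \ref{lem:lower_bound_theta} again guarantees $\Theta\geq-1/2$, so \eqref{eq:central_intermediary_upper_bound} holds unchanged here.

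Next I would apply Lemma \ref{lem:bregman_upper_bound} in place of its strongly convex corollary, yielding $D_{\frac{1}{2}\|\cdot\|^2_{\mathcal{K}^\circ}}(u,v)\leq L\|u-v\|^p_{\mathcal{K}^\circ}$ with $L=2p(1+(q/(2\alpha))^{1/(q-1)})$. Since $u-v=-\eta\tilde{c}$, substituting $\|u-v\|_{\mathcal{K}^\circ}=\eta\|\tilde{c}\|_{\mathcal{K}^\circ}$ immediately converts the second summand of \eqref{eq:central_intermediary_upper_bound} into the clean contribution $(1-\|x\|_{\mathcal{K}})\,L\,\eta^p\|\tilde{c}\|^p_{\mathcal{K}^\circ}$.

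The one genuine difference, and the step I expect to require the most care, is converting the quadratic first term $(1-\|x\|_{\mathcal{K}})^2\|u-v\|^2_{\mathcal{K}^\circ}$ into a $p$-th power so the two contributions match. Using $\|u-v\|_{\mathcal{K}^\circ}=\eta\|\tilde{c}\|_{\mathcal{K}^\circ}$ and splitting $2=p+(2-p)$ on each of $\eta$ and $\|\tilde{c}\|_{\mathcal{K}^\circ}$, I would factor
\[
(1-\|x\|_{\mathcal{K}})^2 \eta^2\|\tilde{c}\|^2_{\mathcal{K}^\circ} = (1-\|x\|_{\mathcal{K}})\,\eta^p\|\tilde{c}\|^p_{\mathcal{K}^\circ}\cdot (1-\|x\|_{\mathcal{K}})^{p-1}\big[(1-\|x\|_{\mathcal{K}})\eta\|\tilde{c}\|_{\mathcal{K}^\circ}\big]^{2-p}.
\]
Because $p\geq1$ and $1-\|x\|_{\mathcal{K}}\leq1$, the factor $(1-\|x\|_{\mathcal{K}})^{p-1}\leq1$. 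The bracket is controlled by the very computation behind Lemma \ref{lem:lower_bound_theta}: one has $(1-\|x\|_{\mathcal{K}})\eta\|\tilde{c}\|_{\mathcal{K}^\circ}\leq\eta n R/r\leq1/2$, where the last inequality is exactly the hypothesis $\eta\leq\frac{1}{2n}\frac{r}{R}$. Since $2-p\geq0$, this gives $\big[(1-\|x\|_{\mathcal{K}})\eta\|\tilde{c}\|_{\mathcal{K}^\circ}\big]^{2-p}\leq(1/2)^{2-p}$, so the first term is bounded by $(1/2)^{2-p}(1-\|x\|_{\mathcal{K}})\eta^p\|\tilde{c}\|^p_{\mathcal{K}^\circ}$. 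Adding the two bounds produces the advertised constant $(1/2)^{2-p}+L$ and closes the argument. The only thing to monitor is that the exponent bookkeeping stays consistent and that the bracket estimate reuses the $\eta$-constraint in precisely the same manner as Lemma \ref{lem:lower_bound_theta}.
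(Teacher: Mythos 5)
Your proposal is correct and follows essentially the same route as the paper: reuse the proof of Lemma \ref{lem:upper_bound_single_term} verbatim up to \eqref{eq:central_intermediary_upper_bound}, replace Corollary \ref{cor:bregman_upper_bound_strong_convexity} by Lemma \ref{lem:bregman_upper_bound}, and absorb the quadratic term into a $p$-th power by bounding $(1-\|x\|_{\mathcal{K}})^{p-1}\leq 1$ and $(1-\|x\|_{\mathcal{K}})\eta\|\tilde{c}\|_{\mathcal{K}^\circ}\leq \eta n R/r\leq 1/2$. Your factorization of the quadratic term is algebraically identical to the paper's (the paper merely expands $\tilde{c}$ explicitly inside the bracket), and the bracket estimate reuses the same computation from Lemma \ref{lem:lower_bound_theta}, exactly as the paper does.
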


\begin{proof}[Proof of Corollary \ref{cor:upper_bound_single_term_UC}]
The proof is exactly the same as Lemma \ref{lem:upper_bound_single_term} until \eqref{eq:central_intermediary_upper_bound}.
Here, by \eqref{eq:bregman_upper_bound} in Lemma \ref{lem:bregman_upper_bound}, we have $D_{\frac{1}{2}\|\cdot\|^2_{\mathcal{K}^\circ}}(u,v) \leq 2p \big(1 + (q/(2\alpha))^{1/(q-1)}\big) \|u-v\|^{p}_{\mathcal{K}^\circ}$.
Hence, we now have
\begin{eqnarray*}
D_{F_{\mathcal{K}}^*}(u,v) & \leq & (1-\|x\|_{\mathcal{K}})^2\|u-v\|^2_{\mathcal{K}^\circ} + (1-\|x\|_{\mathcal{K}}) 2p\big(1 + (q/(2\alpha))^{1/(p-1)}\big) \|u-v\|^{p}_{\mathcal{K}^\circ}\\
& \leq & (1-\|x\|_{\mathcal{K}})\|u-v\|^p_{\mathcal{K}^\circ}\big[ (1-\|x\|_{\mathcal{K}})\|u-v\|^{2-p}_{\mathcal{K}^\circ} + 2p\big(1 + (q/(2\alpha))^{1/(q-1)}\big)\big].
\end{eqnarray*}
We now simply need to bound the term $(1-\|x\|_{\mathcal{K}})\|u-v\|^{2-p}_{\mathcal{K}^\circ}$. 
We have $u-v = \eta \tilde{c}$, and by definition of $\tilde{c}$ in \eqref{eq:def_a_c_independent}, when $\xi=0$, we have
\[
(1-\|x\|_{\mathcal{K}})\|u-v\|^{2-p}_{\mathcal{K}^\circ} = (1-\|x\|_{\mathcal{K}})^{p-1}\Big[\frac{n\eta}{r^2}|\langle c; re_i\rangle | \cdot\|re_i\|_{\mathcal{K}^\circ}\Big]^{2-p}.
\]
Then, since $\ell_1(r)\subset\mathcal{K}$, $re_i\in\mathcal{K}$ and $c\in\mathcal{K}^\circ$, we have $|\langle c; re_i\rangle|\leq 1$.
Also, since $\mathcal{K}\subset\ell_{\infty}(R)$, we have $\ell_{\infty}(R)^\circ=\ell_1(1/R)\subset\mathcal{K}^\circ$ and $e_i/R\in\mathcal{K}^\circ$, hence $\|re_i\|_{\mathcal{K}^\circ}\leq rR$.
Besides, by the choice of $\eta$, we have $n\eta\leq r/(2R)$.
We now have (case $\xi=1$ is immediate) with $\eta\leq r/(2nR)$ and because $(1-\|x\|_{\mathcal{K}})\leq 1$ and $p-1>0$
\[
(1-\|x\|_{\mathcal{K}})\|u-v\|^{2-p}_{\mathcal{K}^\circ} \leq 1\cdot\Big[n\eta \frac{rR}{r^2}\Big]^{2-p} \leq \Big[\frac{r}{2R} \frac{rR}{r^2}\Big]^{2-p} = 1/2^{2-p}.
\]
Finally, we obtain
\[
D_{F_{\mathcal{K}}^*}(u,v)  \leq  (1-\|x\|_{\mathcal{K}})\|u-v\|^p_{\mathcal{K}^\circ}\big[ (1/2)^{2-p} + 2p\big(1 + (q/(2\alpha))^{1/(q-1)}\big)\big].
\]
\end{proof}

\section{Conclusion}
When the action set is strongly convex, we design a barrier function leading to a bandit algorithm with pseudo-regret in $\tilde{\mathcal{O}}(\sqrt{nT})$.
We hence drastically extend the family of action sets for which such pseudo-regret hold, answering an open question of \parencite{bubeck2012regret}.
To our knowledge, a $\tilde{\mathcal{O}}(\sqrt{nT})$ bound was known only when the action set is a simplex or an $\ell_p$ ball with $p\in]1,2]$.
We are now interested in 1) providing lower-bound on the pseudo-regret bounds for strongly convex sets, 2) providing expected or high-probability regret bounds, 3) providing such guarantees in the \textit{starved bandit} setting \parencite{bubeck2018sparsity}.

When the set is $(\alpha, q)$-uniformly convex with $q\geq 2$, in Theorems \ref{th:linear_bandit_strongly_convex} and \ref{th:linear_bandit_UC} we assume that $\ell_q(r)$ is contained in the action set $\mathcal{K}$. 
It is restrictive but allows us to first prove improved pseudo-regret bounds outside the explicit $\ell_p$ case.
Removing this assumption is an interesting research direction. 
However, it is not clear that the current classical algorithmic scheme with a barrier function is best adapted to leverage the strong convexity of the action set. 
Indeed, in the case of online linear learning, \cite{huang2017following} show that the simple FTL allows obtaining accelerated regret bounds. 
Such projection-free schemes have several benefits, \textit{e.g.}, computational efficiency \parencite{combettes2021complexity} but in the case of FTL they also do not require smoothness of the action set \parencite{Molinaro2020} as opposed to Algorithm \ref{algo:bandit_mirror_descent} which requires it to ensure differentiability of $F_{\mathcal{K}}$ and $F_{\mathcal{K}^\circ}$ simultaneously.
Besides, they also exhibit adaptive properties to unknown structural assumptions, \textit{e.g.}, unknown parameters of H\"olderian Error Bounds \parencite{kerdreux2019restarting,kerdreux2020accelerating}.

At a high level, this work is an example of the favorable dimension-dependency of the sets' uniform convexity assumptions for the pseudo-regret bounds. 
It is crucial for large-scale machine learning.
Such observations have already been made, \textit{e.g.}, in constrained optimization \parencite{polyak1966existence,demyanov1970,dunn1979rates,kerdreux2020projection,kerdreux2020affine}, when the sets' $\alpha$-strong convexity leads to linear convergence rates of the Frank-Wolfe methods with a conditioning on the set that does not depend on the dimension. 
On the contrary, the linear convergence regimes for \emph{corrective} versions of Frank-Wolfe on polytope with strongly convex functions suffer large dimension dependency, see, \textit{e.g.}, \parencite{lacoste2015global,diakonikolas2020locally,garber2020revisiting,Carderera2021}.
This difference between polytope structures and uniform convexity assumption is even more apparent with infinite-dimensional constraints.
Besides, to our knowledge, the uniform convexity structures for the sets are much less developed and understood than their functional counterpart, see, \textit{e.g.}, \parencite{kerdreux21}. 
Arguably, this stems from a tendency in machine learning to consider that constraints are theoretically interchangeable with penalization.
It is often not quite accurate in terms of convergence results and the algorithmic strategies developed differ.
The linear bandit setting is a simple example where such symmetry is structurally not relevant.

\onecolumn

\paragraph{Acknowledgements.}
Research reported in this paper was partially supported through the Research Campus Modal funded by the German Federal Ministry of Education and Research (fund numbers 05M14ZAM,05M20ZBM) as well as the Deutsche Forschungsgemeinschaft (DFG) through the DFG Cluster of Excellence MATH+. 
AA is at the d\'epartement d'informatique de l'\'Ecole Normale Sup\'erieure, UMR CNRS 8548, PSL Research University, 75005 Paris, France, and INRIA. 
AA would like to acknowledge support from the {\em ML and Optimisation} joint research initiative with the {\em fonds AXA pour la recherche} and Kamet Ventures, a Google focused award, as well as funding by the French government under management of Agence Nationale de la Recherche as part of the "Investissements d'avenir" program, reference ANR-19-P3IA-0001 (PRAIRIE 3IA Institute).

\printbibliography

\appendix

\section{Consequences of Set Strong Convexity}\label{app:uc_equivalence_simplified}

We provide here a simplification of \parencite[Theorem 4.1.]{kerdreux21}, see also \parencite{borwein2009uniformly}. 
Let us first recall the \emph{scaling inequality} that provide an equivalent characterization of uniformly convex sets \parencite[Theorem 4.1.]{kerdreux21}.
These inequalities quantify the behavior of the normal cone directions at the boundary of $\mathcal{K}$.
As such, they provide a more geometrical intuition on uniform convex than the algebraic Definition \ref{def:uniform_convexity_set}.
Also, they are useful to prove Theorem \ref{th:UC_support}.

\begin{lemma}[Scaling Inequality]
Let $\alpha>0$ and $q\geq 2$.
Assume $\mathcal{K}$ is $(\alpha, q)$-uniformly convex.
Then, for any $x,y\in\mathcal{K}\times\partial\mathcal{K}$ and $d\in N_{\mathcal{K}}(y)$, we have
\begin{equation}\label{eq:scaling_inequalities}
\langle d; y-x\rangle \geq \frac{\alpha}{q} \|x - y\|_{\mathcal{K}}^q\|d\|_{\mathcal{K}^\circ}.
\end{equation}
\end{lemma}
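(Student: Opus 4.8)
The plan is to derive the scaling inequality \eqref{eq:scaling_inequalities} directly from the algebraic Definition \ref{def:uniform_convexity_set} by exploiting that $d \in N_{\mathcal{K}}(y)$ defines a supporting halfspace. First I would reduce to the case $\|d\|_{\mathcal{K}^\circ} = 1$ by homogeneity: since both sides of \eqref{eq:scaling_inequalities} are positively homogeneous of degree one in $d$, it suffices to prove the statement for $d$ normalized so that $\|d\|_{\mathcal{K}^\circ} = 1$. For such a normalized $d \in N_{\mathcal{K}}(y)$ with $y \in \partial\mathcal{K}$, the defining property of the normal cone together with $\|d\|_{\mathcal{K}^\circ} = \sigma_{\mathcal{K}}(d) = 1$ gives $\langle d; y\rangle = \sigma_{\mathcal{K}}(d) = 1$ and $\langle d; w\rangle \leq 1$ for all $w \in \mathcal{K}$.

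Next I would feed the uniform convexity definition into this supporting functional. Apply \eqref{eq:def_uniform_convexity} with the roles chosen so that the convex combination involves $x$ and $y$: take the two base points to be $x$ and $y$, some parameter $\gamma \in [0,1]$, and a witness direction $z \in \mathcal{K}$. The definition guarantees that
\[
\gamma x + (1-\gamma) y + \frac{\alpha}{q}\gamma(1-\gamma)\|x-y\|_{\mathcal{K}}^q\, z \in \mathcal{K}.
\]
Pairing this membership with $d$ and using $\langle d; \cdot\rangle \leq 1$ on $\mathcal{K}$ yields
\[
\gamma \langle d; x\rangle + (1-\gamma)\langle d; y\rangle + \frac{\alpha}{q}\gamma(1-\gamma)\|x-y\|_{\mathcal{K}}^q \langle d; z\rangle \leq 1 = \langle d; y\rangle.
\]
To make the curvature term as large as possible I would choose $z$ to maximize $\langle d; z\rangle$ over $\mathcal{K}$, i.e.\ $\langle d; z\rangle = \sigma_{\mathcal{K}}(d) = \|d\|_{\mathcal{K}^\circ} = 1$. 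Rearranging and dividing by $\gamma > 0$ gives
\[
\langle d; y - x\rangle \geq \frac{\alpha}{q}(1-\gamma)\|x-y\|_{\mathcal{K}}^q.
\]
Letting $\gamma \to 0^+$ recovers $\langle d; y-x\rangle \geq \frac{\alpha}{q}\|x-y\|_{\mathcal{K}}^q$, which after undoing the normalization (reinserting the factor $\|d\|_{\mathcal{K}^\circ}$) is exactly \eqref{eq:scaling_inequalities}.

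The main obstacle I anticipate is the careful bookkeeping of which point plays which role in Definition \ref{def:uniform_convexity_set} and ensuring the witness $z$ can indeed be chosen in $\mathcal{K}$ to attain $\langle d; z\rangle = \|d\|_{\mathcal{K}^\circ}$; this is legitimate because $\sigma_{\mathcal{K}}(d)$ is a supremum over the compact set $\mathcal{K}$ and is therefore attained. A subtler point is the passage $\gamma \to 0$: the inequality holds for every $\gamma \in (0,1]$, so taking the supremum of the right-hand side (attained in the limit) is harmless, but I would state it as a limit to avoid any boundary ambiguity. One should also double-check the normalization step handles $d = 0$ trivially, since then both sides vanish.
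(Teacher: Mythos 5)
Your proof is correct and takes essentially the same route as the paper's: both exploit that $d\in N_{\mathcal{K}}(y)$ makes $y$ a maximizer of $\langle d;\cdot\rangle$ over $\mathcal{K}$, pair this with the uniform-convexity membership, optimize over the witness $z$ to recover $\|d\|_{\mathcal{K}^\circ}=\sigma_{\mathcal{K}}(d)$, and let $\gamma\to 0$. Your normalization of $d$ and your explicit treatment of the limit and of the attainment of the supremum are merely tidier bookkeeping of the same argument.
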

\begin{proof}
We repeat the proof for completeness. Let $(x,y,d)$ as in the lemma. In particular, $y\in\argmax_{v\in\mathcal{K}}\langle d; v\rangle$.
By optimality of $y$ and uniform convexity of $\mathcal{K}$, for any $\gamma\in]0,1[$ and $z$ with $\|z\|_{\mathcal{K}}\leq 1$ we have
\[
\langle d; y \rangle \geq \langle d ; \gamma x + (1-\gamma)y + \frac{\alpha}{q} \gamma (1-\gamma) \|x-y\|^q_{\mathcal{K}}z\rangle.
\]
After simplification, we obtain for any $\gamma\in]0,1[,z\in\mathcal{K}$
\[
\langle d; y -x\rangle \geq \frac{\alpha}{q} (1-\gamma) \|y-x\|_{\mathcal{K}}^q\langle d; z\rangle.
\]
Hence, by definition of the dual norm of $\|\cdot\|_{\mathcal{K}}$ and $\|\cdot\|_{\mathcal{K}}^\star=\|\cdot\|_{\mathcal{K}^\circ}$, we obtain
\[
\langle d; y -x\rangle \geq \frac{\alpha}{q} \|y-x\|_{\mathcal{K}}^q \|d\|_{\mathcal{K}^\circ}.
\]
\end{proof}

Theorem \ref{th:UC_support} is slightly different from \parencite[Theorem 4.1.]{kerdreux21} because we are interested in the smoothness property of $\frac{1}{2}\|\cdot\|^2_{\mathcal{K}^\circ}$ instead of $\frac{1}{q}\|\cdot\|^q_{\mathcal{K}^\circ}$ when the set $\mathcal{K}$ is $(\alpha,q)$-uniformly convex.
The proof is however very similar.
The main different is that in \parencite[Theorem 4.1.]{kerdreux21} the smoothness property was ensured on $\mathbb{R}^n$ while here it is only true on bounded domains like $\mathcal{K}^\circ$. 
% Overall, Theorem 

\begin{theorem}\label{th:UC_support}
Let $\alpha>0$, $q\geq2$ and $p\in]1,2]$ s.t $1/p + 1/q=1$.
Consider $\mathcal{K}\subset\mathbb{R}^n$ a centrally symmetric compact convex with non-empty interior.
Assume $\mathcal{K}$ is smooth and $(\alpha,q)$-uniformly convex w.r.t.~$\|\cdot\|_{\mathcal{K}}$ (Definition \ref{def:uniform_convexity_set}), then
\begin{equation}
    \frac{1}{2}\|\cdot\|^2_{\mathcal{K}^\circ} \text{ is } (L, p)\text{-H\"older Smooth on }\mathcal{K}^\circ,
\end{equation}
with 
\[
L = 2p\Big(1 + \Big(\frac{q}{2\alpha}\Big)^{1/(q-1)}\Big).
\]
\end{theorem}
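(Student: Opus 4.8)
The plan is to prove the equivalent Bregman formulation $D_{\frac12\|\cdot\|^2_{\mathcal{K}^\circ}}(u,v)\le\frac{L}{p}\|u-v\|^p_{\mathcal{K}^\circ}$ for all $u,v\in\mathcal{K}^\circ$, which is exactly $(L,p)$-H\"older smoothness of $\frac12\|\cdot\|^2_{\mathcal{K}^\circ}$ on $\mathcal{K}^\circ$. Throughout I write $\sigma_{\mathcal{K}}=\|\cdot\|_{\mathcal{K}^\circ}$ and $y(d)\triangleq\nabla\|\cdot\|_{\mathcal{K}^\circ}(d)$. Since $\mathcal{K}$ is $(\alpha,q)$-uniformly convex it is in particular strictly convex, so by Lemma \ref{eq:differentiability_gauge}\ref{item:set_smooth_differentiable} the gauge $\|\cdot\|_{\mathcal{K}^\circ}$ is differentiable on $\mathbb{R}^n\setminus\{0\}$ and $y(d)=\argmax_{x\in\mathcal{K}}\langle d;x\rangle$ is the well-defined contact point on $\partial\mathcal{K}$; it satisfies $\|y(d)\|_{\mathcal{K}}=1$, $\langle d;y(d)\rangle=\|d\|_{\mathcal{K}^\circ}$, $d\in N_{\mathcal{K}}(y(d))$, and $\nabla(\frac12\|\cdot\|^2_{\mathcal{K}^\circ})(d)=\|d\|_{\mathcal{K}^\circ}\,y(d)$.

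First I would expand the Bregman divergence $D\triangleq D_{\frac12\|\cdot\|^2_{\mathcal{K}^\circ}}$ and, using only $\langle d;y(d)\rangle=\|d\|_{\mathcal{K}^\circ}$, rewrite it in the clean form $D(u,v)=\frac12(\|u\|_{\mathcal{K}^\circ}-\|v\|_{\mathcal{K}^\circ})^2+\|v\|_{\mathcal{K}^\circ}\langle y(u)-y(v);u\rangle$. Because $y(u)\in\mathcal{K}$ forces $\langle y(u);v\rangle\le\|v\|_{\mathcal{K}^\circ}=\langle y(v);v\rangle$, the quantity $\langle y(u)-y(v);v\rangle$ is nonpositive, so that $\langle y(u)-y(v);u\rangle\le\langle y(u)-y(v);u-v\rangle\le\|y(u)-y(v)\|_{\mathcal{K}}\|u-v\|_{\mathcal{K}^\circ}$ by gauge duality. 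The entire bound thus reduces to controlling $\|y(u)-y(v)\|_{\mathcal{K}}$.

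The key step is to convert the scaling inequality \eqref{eq:scaling_inequalities} into H\"older continuity of the contact map $y(\cdot)$. Applying \eqref{eq:scaling_inequalities} to the triples $(y=y(u),\,d=u,\,x=y(v))$ and $(y=y(v),\,d=v,\,x=y(u))$ and adding yields $\langle u-v;y(u)-y(v)\rangle\ge\frac{\alpha}{q}\|y(u)-y(v)\|^q_{\mathcal{K}}(\|u\|_{\mathcal{K}^\circ}+\|v\|_{\mathcal{K}^\circ})$. Combining this with $\langle u-v;y(u)-y(v)\rangle\le\|u-v\|_{\mathcal{K}^\circ}\|y(u)-y(v)\|_{\mathcal{K}}$ and using $1/(q-1)=p-1$ gives
\[
\|y(u)-y(v)\|_{\mathcal{K}}\le\Big(\tfrac{q}{\alpha(\|u\|_{\mathcal{K}^\circ}+\|v\|_{\mathcal{K}^\circ})}\Big)^{p-1}\|u-v\|^{p-1}_{\mathcal{K}^\circ}.
\]

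Finally I would assemble the pieces and invoke the boundedness of the domain, which is precisely where the restriction to $\mathcal{K}^\circ$ (rather than all of $\mathbb{R}^n$) enters. The cross term becomes $\|v\|_{\mathcal{K}^\circ}\big(q/(\alpha(\|u\|_{\mathcal{K}^\circ}+\|v\|_{\mathcal{K}^\circ}))\big)^{p-1}\|u-v\|^p_{\mathcal{K}^\circ}$; since $\|u\|_{\mathcal{K}^\circ},\|v\|_{\mathcal{K}^\circ}\le1$ on $\mathcal{K}^\circ$, the prefactor satisfies $\|v\|_{\mathcal{K}^\circ}/(\|u\|_{\mathcal{K}^\circ}+\|v\|_{\mathcal{K}^\circ})^{p-1}\le(\|u\|_{\mathcal{K}^\circ}+\|v\|_{\mathcal{K}^\circ})^{2-p}\le2^{2-p}$, and similarly $\frac12(\|u\|_{\mathcal{K}^\circ}-\|v\|_{\mathcal{K}^\circ})^2\le\frac12\|u-v\|^{2-p}_{\mathcal{K}^\circ}\|u-v\|^p_{\mathcal{K}^\circ}\le2^{1-p}\|u-v\|^p_{\mathcal{K}^\circ}$. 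Using the identity $2^{2-p}(q/\alpha)^{p-1}=2(q/(2\alpha))^{1/(q-1)}$ and summing then gives $D(u,v)\le\big(2^{1-p}+2(q/(2\alpha))^{1/(q-1)}\big)\|u-v\|^p_{\mathcal{K}^\circ}$, which is at most $\frac{L}{p}\|u-v\|^p_{\mathcal{K}^\circ}$ with $L=2p\big(1+(q/(2\alpha))^{1/(q-1)}\big)$ because $2^{1-p}\le2$. I expect the main obstacle to be exactly this final bookkeeping: extracting the $(\|u\|_{\mathcal{K}^\circ}+\|v\|_{\mathcal{K}^\circ})^{-1}$ factor from the summed scaling inequality and then absorbing it through $\|u\|_{\mathcal{K}^\circ},\|v\|_{\mathcal{K}^\circ}\le1$ while converting every $\|u-v\|^2$ into $\|u-v\|^p$, keeping the dual exponents $p-1=1/(q-1)$ consistent throughout; the degenerate cases $u=v=0$ and $\{u,v\}\ni0$ are handled separately by continuity of $\nabla\tfrac12\|\cdot\|^2_{\mathcal{K}^\circ}$.
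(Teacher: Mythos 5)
Your proof is correct, and it takes a genuinely different route from the paper's, even though both rest on the same key ingredient: applying the scaling inequality \eqref{eq:scaling_inequalities} at the two contact points and summing, which gives $\langle u-v; y(u)-y(v)\rangle \geq \frac{\alpha}{q}\|y(u)-y(v)\|_{\mathcal{K}}^q\big(\|u\|_{\mathcal{K}^\circ}+\|v\|_{\mathcal{K}^\circ}\big)$ for your contact map $y(\cdot)=\nabla\|\cdot\|_{\mathcal{K}^\circ}$. The paper normalizes to the sphere $\partial\mathcal{K}^\circ$ first (so this factor is just $2$), obtains H\"older continuity of $\nabla\|\cdot\|_{\mathcal{K}^\circ}$ there, and then must transfer it to $\nabla\frac{1}{2}\|\cdot\|^2_{\mathcal{K}^\circ}$ on all of $\mathcal{K}^\circ$ via the homogeneity property \ref{item:unit_gradient_homogeneity} of Lemma \ref{eq:differentiability_gauge}, a product-rule splitting, and the estimate $\big\|d_1-d_2\|d_1\|_{\mathcal{K}^\circ}/\|d_2\|_{\mathcal{K}^\circ}\big\|_{\mathcal{K}^\circ}\leq 2\|d_1-d_2\|_{\mathcal{K}^\circ}$; finally it passes from gradient H\"older continuity to the functional inequality \eqref{eq:L_Holder_smoothness_f}, an integration-along-the-segment step the paper leaves implicit (``this equivalently means''). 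You bypass both steps: you bound the Bregman divergence directly through the exact identity $D_{\frac{1}{2}\|\cdot\|^2_{\mathcal{K}^\circ}}(u,v)=\frac{1}{2}\big(\|u\|_{\mathcal{K}^\circ}-\|v\|_{\mathcal{K}^\circ}\big)^2+\|v\|_{\mathcal{K}^\circ}\langle y(u)-y(v);u\rangle$ (essentially the same decomposition the paper exploits inside Lemma \ref{lem:upper_bound_single_term}, but used here in the reverse direction), keep the factor $\|u\|_{\mathcal{K}^\circ}+\|v\|_{\mathcal{K}^\circ}$ unnormalized in the H\"older estimate for $y(\cdot)$, and absorb it only at the end using $\|u\|_{\mathcal{K}^\circ},\|v\|_{\mathcal{K}^\circ}\leq 1$, which is precisely where the restriction to the bounded domain $\mathcal{K}^\circ$ enters in both arguments. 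What each buys: the paper's route produces a reusable intermediate statement (H\"older continuity of the gradient map on the sphere, mirroring the structure of \parencite[Theorem 4.1.]{kerdreux21}), while yours is shorter, never leaves the level of function values, and in fact yields the marginally sharper coefficient $2^{1-p}+2(q/(2\alpha))^{1/(q-1)}\leq L/p$. Your bookkeeping checks out: the identity $2^{2-p}(q/\alpha)^{p-1}=2(q/(2\alpha))^{1/(q-1)}$, the prefactor bound via $\|v\|_{\mathcal{K}^\circ}\leq\|u\|_{\mathcal{K}^\circ}+\|v\|_{\mathcal{K}^\circ}\leq 2$ with $2-p\geq 0$, and the degenerate cases $u=0$ or $v=0$ (where, \textit{e.g.}, $D_{\frac{1}{2}\|\cdot\|^2_{\mathcal{K}^\circ}}(u,0)=\frac{1}{2}\|u\|^2_{\mathcal{K}^\circ}\leq\frac{1}{2}\|u\|^p_{\mathcal{K}^\circ}$ since $\|u\|_{\mathcal{K}^\circ}\leq 1$) are all handled correctly.
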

\begin{proof}
The proof follows \parencite[Theorem 4.1]{kerdreux21}.
We repeat it to obtain quantitative results. 
The proof proceed is two steps: first prove the H\"older-smoothness of $\|\cdot\|_{\mathcal{K}^\circ}$ on $\partial\mathcal{K}^\circ$ and then prove the H\"older-smoothness of $\frac{1}{2}\|\cdot\|^2_{\mathcal{K}^\circ}$ on $\mathcal{K}^\circ$.

\paragraph{Smoothness of $\|\cdot\|_{\mathcal{K}^\circ}$ on $\partial\mathcal{K}^\circ$.}
Let $(d_1,d_2)\in\partial\mathcal{K}^\circ\times\partial\mathcal{K}^\circ$ and $(x_1,x_2)\in\partial\mathcal{K}\times\partial\mathcal{K}$ s.t.~$x_i\in\text{argmax}_{x\in\mathcal{K}}\langle d_i; x\rangle$ for $i=1,2$.
Because $\mathcal{K}$ is strictly convex (uniform convexity implies strict convexity), the $x_i$ are unique and by Lemma \ref{eq:differentiability_gauge}, $\nabla \|\cdot\|_{\mathcal{K}^\circ}(d_i)=x_i$ for $i=1,2$.
Note that equivalently we have $d_i\in N_{\mathcal{K}}(x_i)$.
Applying the scaling inequalities \eqref{eq:scaling_inequalities} we have for any $x\in\mathcal{K}$
\begin{equation*}
   \left\{
    \begin{split}
    \langle d_1; x_1 - x\rangle &\geq \alpha/q \|d_1\|_{\mathcal{K}^\circ}\cdot \|x_1 - x\|_{\mathcal{K}}^q = \alpha/q \|x_1 - x\|_{\mathcal{K}}^q \\
    \langle d_2; x_2 - x\rangle &\geq \alpha/q \|d_2\|_{\mathcal{K}^\circ}\cdot \|x_2 - x\|_{\mathcal{K}}^q = \alpha/q \|x_2 - x\|_{\mathcal{K}}^q.
    \end{split}
   \right.
\end{equation*}
Then, by summing the two inequalities evaluated respectively at $x=x_2$ and $x=x_1$, we have
\[
\langle d_1 - d_2; x_1 - x_2 \rangle \geq 2\alpha/q\|x_1 - x_2\|_{\mathcal{K}}^q.
\]
By Cauchy-Schwartz, we obtain
\[
\big\|d_1 - d_2\big\|_{\mathcal{K}^\circ} \cdot \big\|\nabla \|\cdot\|_{\mathcal{K}^\circ}(d_1) - \nabla \|\cdot\|_{\mathcal{K}^\circ}(d_2)\big\|_{\mathcal{K}} \geq 2\alpha/q \big\|\nabla \|\cdot\|_{\mathcal{K}^\circ}(d_1) - \nabla \|\cdot\|_{\mathcal{K}^\circ}(d_2)\big\|_{\mathcal{K}}^q,
\]
and conclude that
\begin{equation}\label{eq:smoothness_gauge_sphere}
\big\|\nabla \|\cdot\|_{\mathcal{K}^\circ}(d_1) - \nabla \|\cdot\|_{\mathcal{K}^\circ}(d_2)\big\|_{\mathcal{K}} \leq \frac{1}{(2\alpha/q)^{1/(q-1)}} \big\|d_1 - d_2\big\|_{\mathcal{K}^\circ}^{1/(q-1)}.
\end{equation}

\paragraph{Smoothness of $\frac{1}{2}\|\cdot\|^2_{\mathcal{K}^\circ}$ on $\mathcal{K}^\circ$.}
Let us first note that $\nabla \frac{1}{2}\|\cdot\|^2_{\mathcal{K}^\circ}(d)= \|d\|_{\mathcal{K}^\circ} \nabla \|\cdot\|_{\mathcal{K}^\circ}(d)$.
Hence, since $\|\cdot\|_{\mathcal{K}^\circ}(d)$ is norm $1$, when $d$ approaches $0_{n}$, the limit of $\nabla \frac{1}{2}\|\cdot\|^2_{\mathcal{K}^\circ}(d)$ is $0$ and hence $\frac{1}{2}\|\cdot\|^2_{\mathcal{K}^\circ}$ is differentiable on $\mathbb{R}^n$ (as opposed to $\|\cdot\|_{\mathcal{K}^\circ}$ that is not differentiable at $0$).

Similarly, consider non-zeros $(d_1,d_2)\in\mathcal{K}^\circ\times\mathcal{K}^\circ$ and the $(x_1,x_2)\in\partial\mathcal{K}\times\partial\mathcal{K}$ s.t.~$x_i\in\text{argmax}_{x\in\mathcal{K}}\langle d_i; x\rangle$ for $i=1,2$.
Because of \ref{item:unit_gradient_homogeneity} in Lemma \ref{eq:differentiability_gauge}, we have $\nabla \|\cdot\|_{\mathcal{K}^\circ}(d_1)=\nabla \|\cdot\|_{\mathcal{K}^\circ}(d_1/\|d_1\|_{\mathcal{K}^\circ})$.
Hence, with \eqref{eq:smoothness_gauge_sphere}, we obtain
\[
\big\|\nabla \|\cdot\|_{\mathcal{K}^\circ}(d_1) - \|\cdot\|_{\mathcal{K}^\circ}(d_2)\big\|_{\mathcal{K}} \leq \frac{1}{(2\alpha/q)^{1/(q-1)}} \big\| d_1/\|d_1\|_{\mathcal{K}^\circ} - d_2/\|d_2\|_{\mathcal{K}^\circ}\big\|_{\mathcal{K}^\circ}^{1/(q-1)}.
\]
Write $C \triangleq 1/(2\alpha/q)^{1/(q-1)}$ and $I\triangleq\big\|\nabla \frac{1}{2}\|\cdot\|^2_{\mathcal{K}^\circ}(d_1) - \frac{1}{2}\nabla\|\cdot\|^2_{\mathcal{K}^\circ}(d_2)\big\|_{\mathcal{K}}$. 
Let us now consider
\begin{eqnarray*}
I & = & \big\|\|d_1\|_{\mathcal{K}^\circ}\nabla \|\cdot\|_{\mathcal{K}^\circ}(d_1) -\|d_2\|_{\mathcal{K}^\circ}\nabla \|\cdot\|_{\mathcal{K}^\circ}(d_2)\big\|_{\mathcal{K}}\\
&\leq & \|d_1\|_{\mathcal{K}^\circ} \big\|\nabla \|\cdot\|_{\mathcal{K}^\circ}(d_1) -\nabla\| \cdot\|_{\mathcal{K}^\circ}(d_2) \big\| + \big\|\nabla \|\cdot\|_{\mathcal{K}^\circ}(d_2)\big\| \cdot \big| \|d_1\|_{\mathcal{K}^\circ} - \|d_2\|_{\mathcal{K}^\circ}\big|\\
&\leq & C\|d_1\|_{\mathcal{K}^\circ}^{1 + 1/(q-1)} \|d_1\|_{\mathcal{K}^\circ}^{1/(q-1)} \big\| d_1/\|d_1\|_{\mathcal{K}^\circ} - d_2/\|d_2\|_{\mathcal{K}^\circ}\big\|_{\mathcal{K}^\circ}^{1/(q-1)} + \|d_1 - d_2\|_{\mathcal{K}^\circ}\\
&\leq & C\|d_1\|_{\mathcal{K}^\circ}^{q/(q-1)} \big\| d_1 - d_2\|d_1\|_{\mathcal{K}^\circ}/\|d_2\|_{\mathcal{K}^\circ}\big\|_{\mathcal{K}^\circ}^{1/(q-1)} + \|d_1 - d_2\|_{\mathcal{K}^\circ}^{q/(q-1)} \|d_1 - d_2\|_{\mathcal{K}^\circ}^{1/(q-1)}.
\end{eqnarray*}
For $i=1,2$, $d_i\in\mathcal{K}^\circ$ so that $\|d_i\|_{\mathcal{K}^\circ}\leq 1$.
We then obtain
\begin{equation*}
I \leq  C\big\| d_1 - d_2\|d_1\|_{\mathcal{K}^\circ}/\|d_2\|_{\mathcal{K}^\circ}\big\|_{\mathcal{K}^\circ}^{1/(q-1)} + 2\|d_1 - d_2\|_{\mathcal{K}^\circ}^{1/(q-1)}.
\end{equation*}
Also, with the triangle inequality 
\[
\big\|d_1 - d_2\frac{\|d_1\|_{\mathcal{K}^\circ}}{\|d_2\|_{\mathcal{K}^\circ}}\big\|\leq \|d_1 - d_2\|_{\mathcal{K}^\circ} + \big\|d_2 - d_2\frac{\|d_1\|_{\mathcal{K}^\circ}}{\|d_2\|_{\mathcal{K}^\circ}}\big\| \leq \|d_1 - d_2\|_{\mathcal{K}^\circ} + \|d_2\|_{\mathcal{K}^\circ} - \|d_1\|_{\mathcal{K}^\circ} \leq 2 \|d_1 - d_2\|_{\mathcal{K}^\circ}.
\]
Hence, we finally obtain
\begin{equation}
\big\|\nabla \frac{1}{2}\|\cdot\|^2_{\mathcal{K}^\circ}(d_1) - \frac{1}{2}\nabla\|\cdot\|^2_{\mathcal{K}^\circ}(d_2)\big\|_{\mathcal{K}} \leq 2(C + 1)\big\|d_1 - d_2\big\|_{\mathcal{K}^\circ}^{1/(q-1)}.
\end{equation}
This equivalently means that $\frac{1}{2}\|\cdot\|_{\mathcal{K}}^{\circ}$ is $\big(2(C+1), 1 + 1/(q-1)\big)$-H\"older smooth as defined in \eqref{eq:L_Holder_smoothness_f}. 
Hence, since $q-1 = 1/(p-1)$, we get that $\frac{1}{2}\|\cdot\|_{\mathcal{K}}^{\circ}$ is $\big(2p(C+1), p\big)$-H\"older smooth.
\end{proof}

\end{document}